\theoremstyle{definition}
\newtheorem{definition}{Definition}[section]
\newtheorem{problem}{Problem}
\newtheorem{proposition}{Proposition}[section]
\newcommand{\norm}[1]{\left\lVert#1\right\rVert}
\DeclareMathOperator{\atantwo}{atan2}
\DeclareMathOperator{\sign}{sgn}
\newcommand{\Mod}[1]{\ (\mathrm{mod}\ #1)}
\title{Collision Avoidance for Unmanned Aerial Vehicles in the Presence of Static and Moving Obstacles \footnote{This is a revised version of the original published in the AIAA Journal of Guidance, Controls and Dynamics, Vol. 43, Iss. 1}}
\author{Andrei Marchidan \footnote{PhD Candidate,  Department of Aerospace Engineering and Engineering Mechanics, andrei.marchidan@utexas.edu} and Efstathios Bakolas \footnote{Associate Professor, Department of Aerospace Engineering and Engineering Mechanics, AIAA Senior Member, bakolas@austin.utexas.edu}}
\affil{The University of Texas at Austin, Austin, TX, 78712}
\begin{document}
	
	\maketitle
	
	\begin{abstract}
		This paper presents a new collision avoidance procedure for unmanned aerial vehicles in the presence of static and moving obstacles. The proposed procedure is based on a new form of local parametrized guidance vector fields, called collision avoidance vector fields, that produce smooth and intuitive maneuvers around obstacles. The maneuvers follow nominal collision-free paths which we refer to as streamlines of the collision avoidance vector fields. In the case of multiple obstacles, the proposed procedure determines a mixed vector field that blends the collision avoidance vector field of each obstacle and assumes its form whenever a pre-defined distance threshold is reached. Then, in accordance to the computed guidance vector fields, different collision avoidance controllers that generate collision-free maneuvers are developed. Furthermore, it is shown that any tracking controller with convergence guarantees can be used with the avoidance controllers to track the streamlines of the collision avoidance vector fields. Finally, numerical simulations demonstrate the efficacy of the proposed approach and its ability to avoid collisions with static and moving pop-up threats in three different practical scenarios. 
	\end{abstract}
	
	\section*{Nomenclature}
	
	{\renewcommand\arraystretch{1.0}
		\noindent\begin{longtable*}{@{}l @{\quad=\quad} l@{}}
			UAV & unmanned aerial vehicle \\
			$\mathbb{R}^n$ & set of $n$-dimensional real vectors\\
			$\mathbb{Z}$ & set of integers\\
			CAVF & collision avoidance vector field\\
			$\emptyset$ & empty set\\
			$\langle \cdot, \cdot \rangle$ & dot product\\
    		$\angle(\cdot,\cdot)$ & angle between two vectors\\
    		$V$ & UAV speed\\
    		$x$ & UAV position on the $x$-axis of the inertial frame \\
    		$y$ & UAV position on the $y$-axis of the inertial frame \\
    		$\bm{p}$ & UAV position vector\\
    		$\psi$ & UAV heading\\
    		$\psi_d$ & UAV desired heading\\
    		$u(t)$ & UAV steering control input at time $t$\\
    		$\xi_{[t_i,T]}$ & UAV position trajectory at time interval $[t_i,T]$\\
    		$T$ & free final time\\
			$r_o$ & obstacle radius\\
    		$r_s$ & sensing range\\
			$\mathcal{S}(t)$ & sensing space at time $t$\\
			$\mathcal{J}(t)$ & index set for all registered obstacles at time $t$\\
			$\mathcal{O}(t)$ & set of all inadmissible UAV positions at time $t$\\
			$\mathcal{F}(t)$ & set of all admissible UAV positions at time $t$\\
			$(\cdot)^\mathsf{C}$ & complement of a set operator\\
			$h(\bm{p})$ & collision avoidance vector field\\
			$\bm{p_o}$ & position of registered obstacle\\
    		$x_o$ & obstacle position on the $x$-axis of the inertial frame \\
			$y_o$ & obstacle position on the $y$-axis of the inertial frame \\
			$h_s(\bm{p})$ & collision avoidance vector field for a static obstacle\\
			$\bm{e_x}$ & $x$-axis unit vector\\
			$\bm{e_y}$ & $y$-axis unit vector\\
			$\bm{e_r}$ & unit vector for the line-of-sight to the obstacle\\
			$\bm{e_\theta}$ & unit vector for the obstacle tangent, perpendicular to $\bm{e_r}$\\
			${^i(\cdot)}$ & inertial frame superscript\\
			${^b(\cdot)}$ & obstacle moving frame superscript\\
			$r$ & radial distance of an agent from an obstacle's center\\
			$\theta$ & angle between the $x$-axis and the line-of-sight between $\bm{p}$ and $\bm{p_o}$\\
			$\beta$ & angle between the desired trajectory direction and obstacle line-of-sight\\
			$V_o$ & obstacle speed\\
			$\theta_o$ & obstacle direction with respect to $x$-axis\\
			$V_b$ & UAV speed in the moving obstacle frame\\
			$\psi_b$ & UAV desired heading in the moving obstacle frame\\
			$h_d(\bm{p})$ & collision avoidance vector field for a dynamic obstacle\\
			$\phi$ & angle between the line-of-sight and the UAV velocity\\
			$u_s(t)$ & collision avoidance control input for a static obstacle\\
			$u_d(t)$ & collision avoidance control input for a dynamic obstacle\\
			$u_m(t)$ & collision avoidance control input for multiple obstacles\\
			$u_t(t)$ & tracking control input for multiple obstacles CAVFs\\ 
	\end{longtable*}}

	\section{Introduction}
	Unmanned Aerial Vehicles (UAVs) have been widely used by both military and civil entities in missions ranging from surveillance, to search-and-rescue, to convoy protection, to pay-load delivery for rescue situations or even for commercial businesses. In accomplishing these tasks, the autonomous vehicle is required to navigate without supervision in environments populated with both static and moving obstacles. Specifically, while it is following planned trajectories that are in accordance with high-level specifications, such as, flying through different waypoints or maintaining a specific course given by a path planning protocol, the UAV must be able to perform maneuvers to avoid pop-up threats or obstacles that may not have been considered for the original trajectory plan. Thus, the need for decentralized, reactive, computationally inexpensive and fast collision avoidance modules arises. 
	
	Flight control systems for unmanned aerial vehicles are well-studied and mature methodologies provide feedback controllers that guarantee accurate tracking of reference pose and orientation. As such, it is a common assumption in the path planning community for UAVs to assume constant altitude operation and to approximate the system with a planar kinematic Dubins model \cite{frew2008coordinated}. This implies that the planning search space reduces to two geometric dimensions and that navigation depends only on speed and steering or heading control. With these assumptions, the collision avoidance problem can be tackled in different ways, depending on additional mission requirements. 
	
	Some of the more notable early works tackled the collision avoidance problem by creating a graph in the autonomous agent's free configuration space, considering only the geometric requirement of finding an obstacle-free path between two points. Then, obstacle-free paths are found by applying different search algorithms that attempt to connect graph nodes while, at the same time, may optimize different metrics. Some of the most used algorithms are Dijkstra's \cite{dijkstra}, A* \cite{A*, yangZhao}, D* or D*-lite algorithms \cite{D*}. The configuration space graphs are usually created using either basic sampling techniques that depend on feature density or more complex techniques that employ more advanced space discretizations: decomposition into cells \cite{zhuLatombe}, Voronoi diagrams \cite{sharir}, projections \cite{sharir2} or retractions \cite{retract}. The graph-based search algorithms, however, rely on search space granularity for speed and accuracy and, therefore, are not very suitable for real-time applications. To speed up the process of sampling and searching for paths, a sampling-based technique that generates obstacle-free paths very fast was developed by Lavalle \cite{lavalle} and was later extended to more complex dynamics \cite{rrt*, frazzoli-agile, lqrtrees}. These new approaches, relying on randomly sampling the free configuration space, sped up the search algorithms, however, they required both re-planning in the presence of moving obstacles and a large number of samples and collision checks for very cluttered environments, still making them computationally costly.

	Other popular approaches for collision avoidance rely on curve parametrization for trajectory generation \cite{upda-ratnoo, mattei, ikeuchi}. These geometric techniques integrate dynamic and path constraints by performing waypoint parameter optimization in order to define splines, polynomial, logistic or B\'ezier curves, or even clothoids, that are feasible for more complex systems to follow. Such constraints may relate to speed, path curvature, path length, obstacle avoidance, and many other mission requirements. Solving these problems, however, requires high computational resources due to their nonlinear nature which would lead to the appearance of local minima or configurations where the autonomous agent may get stuck. Moreover, none of these parameter optimization techniques are able to account for moving obstacles without further assumptions, simplifications or re-planning.

	A similar class of algorithms used for collision avoidance are optimization-based algorithms that aim to solve nonlinear programs with different NLP (nonlinear programming) solvers \cite{sunliudai, frazzoliSDP} by applying direct or indirect numerical methods. Indirect methods require deriving the necessary optimality conditions and finding the correct adjoint variables to satisfy these conditions, while direct methods use a discretization of the state and input variables in order to reduce the optimal control problem into an NLP problem. Both of these nonlinear programming techniques cannot guarantee convergence to a feasible solution and demand good initial guesses for their decision variables, which still makes them too computationally intensive for real-time applications. 
	
	To avoid the computational complexity of the mentioned techniques, a different approach that generates obstacle-free paths by using artificial potential fields was developed in \cite{khatib}. In this method, obstacles are associated with repelling forces and target destinations are associated with an attractive force. Then, by considering these to be the only forces acting on the autonomous agent's body, the motion plan is generated through gradient descent on the artificial potential field. The method's simplicity comes with two drawbacks: the existence of local minima and the lack of a mechanism to handle input constraints, which may lead to infeasible commands that the agent must follow. These problems were ammended by the introduction of harmonic potential fields, which generate motion plans that mimic fluid flow around static obstacles \cite{waydo,libui,zikang,zadehDSA}. This technique, however, doesn't come with any guarantees that the generated control inputs will be feasible. As such, in an attempt to include kinematic constraints, a new fluid motion planner was introduced in \cite{dLau} and \cite{owen} for curvature and speed constraints. The new method, however, cannot handle moving obstacles.
		 
	In this paper, a new approach that uses a parameterized vector field is proposed for generating constant speed collision avoidance maneuvers around static and moving obstacles. The idea of a parametrized vector field has been used in past works to generate guidance laws for tracking different motion patterns \cite{lawrence2008lyapunov, frew2008coordinated}. The tracking vector fields were obtained using Lyapunov functions that would guarantee convergence to the desired motion plans. By contrast, the parametrized vector fields proposed herein are generated using a new approach that depends on obstacle proximity to modulate the agent's velocity such that it becomes tangential to the obstacle surface before colliding with it. The parameters can be used to adjust the agent's behavior around the obstacle, by performing the transition between obstacle-free motion and collision avoidance from different distances and with different intensities. A steering controller is described for the static and moving obstacle case, respectively, and a control algorithm is proposed for the multiple obstacle case. As such, depending on the obstacles' velocity and their proximity, the UAV may have to use one of the proposed controllers or the proposed control algorithm for avoiding collisions with constant speed. Moreover, the UAV may be required at times to switch between controllers due to the appearance of new obstacles along its path. As a consequence, collision avoidance may not be guaranteed since the switch may not lead to motion continuity between obstacle-free motion and collision avoidance or, in other words, may bring new initial conditions that would not correspond to the desired collision avoidance vector field. Therefore, the use of a tracking controller with proven convergence guarantees is proposed. Tracking the desired inputs leads to the desired motion plan, provided by the collision avoidance vector fields, and obstacle avoidance is guaranteed. 
	
	This paper is organized as follows. In Section \ref{sec::Problem}, the collision avoidance problem is defined. In Section \ref{sec::CAVF}, a new formulation of a guidance vector field that accomplishes collision avoidance is provided. In Section \ref{sec::Control} these vector fields are combined appropriately in order to yield a feasible controller that satisfies the collision avoidance problem requirements. Numerical simulations are illustrated and discussed in Section \ref{sec::sims}. Finally, in Section \ref{sec::concl} concluding remarks are provided.

	\section{Formulation of the Collision Avoidance Problem}
	\label{sec::Problem}
	In this section, the system state space model and the mathematical formulation of the collision avoidance problem are provided. To this end, consider an UAV flying in a horizontal plane with constant forward speed $V$ and direction $\psi$, relative to the $x$-axis of the inertial frame of motion which is fixed to the plane, as illustrated in Figure \ref{fig::coordinates}. The UAV's full state is expressed by its position $\bm{p} := [x,y]^T \in \mathbb{R}^2$ and orientation $\psi \in \mathbb{R}$. The equations of motion are given by the following kinematic model:
	\begin{equation}
	\begin{aligned}
	\dot{x}(t) &= V \cos\psi(t), \quad &x(t_i) &= x_i, \\
	\dot{y}(t) &= V \sin\psi(t), \quad &y(t_i) &= y_i,\\
	\dot{\psi}(t) &= u(t), \quad &\psi(t_i) &= \psi_i, 
	\end{aligned}
	\label{eqn::SysInertial}
	\end{equation}
	where $u$ is the steering rate input, and $[x_i,y_i]^T \in \mathbb{R}^2$ and $\psi_i \in \mathbb{R}$ are the vehicle position and heading, respectively, at the initial time $t_i$. 
	The UAV position trajectory obtained by integrating system \eqref{eqn::SysInertial} forward in time is denoted by $\xi_{[t_i,T]}:[t_i,T] \to \mathbb{R}^2$, where $T > t_i$ is the free final time and $\xi_{[t_i,T]} := \{ \bm{p}(t) : t \in [t_i,T] \}$. 

	\begin{figure}
		\centering
		\includegraphics[width=3.25in]{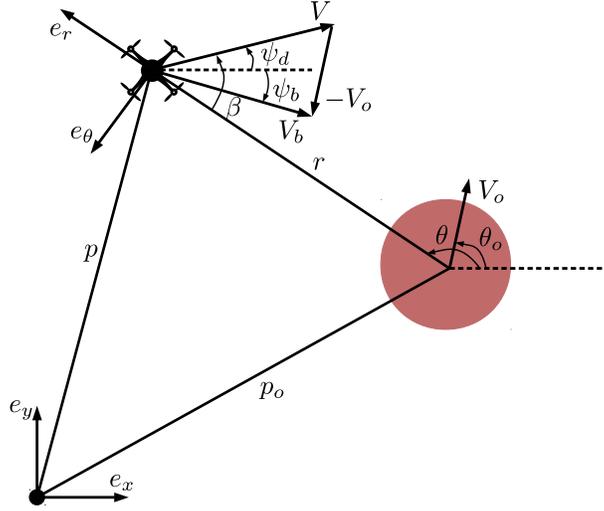}
		\caption{Diagram illustrating the coordinate systems and the state variables used in the collision avoidance problem.}
		\label{fig::coordinates}
	\end{figure}
	
	Further, it is assumed that the UAV is initially traveling at constant speed according to a high-level planning module which provides a constant desired steering angle, denoted by $\psi_d \in \mathbb{R}$. This only suggests that, for the planning period, the UAV's goal is to maintain the same heading and that any deviation from a predefined path would be taken care of by the high-level planner, which is outside the scope of this paper. 
	
	The UAV is moving in an environment that may be populated with static and moving obstacles which can represent no-fly zones, other UAVs, airplanes or physical obstacles. Thus, for planning purposes, it is assumed that each obstacle can be bounded by a circular region of a specific radius $r_o > 0$, determined by the largest obstacle dimension. Moreover, it is assumed that the UAV has a limited sensing range modeled as a circular region of radius $r_s$, determined by its sensor capabilities, centered at the UAV's geometric center. As such, the UAV sensing region at time $t$ may be modeled in accordance with its sensing range:
	\begin{align}
	\mathcal{S}(t) = \{ [\mathsf{x},\mathsf{y}]^T \in \mathbb{R}^2 : \sqrt{(x(t)-\mathsf{x})^2 + (y(t)-\mathsf{y})^2} \leq r_s \},
	\label{def::PlanningSet}
	\end{align}
	Due to the sensor limitations, the obstacles can be viewed as ``pop-up'' motion constraints since they are registered by the UAV only when they enter the circular sensing region. Let $\mathcal{J}(t) = \{1,2,\dots,n(t)\}$ be the index set for all registered obstacles at time $t$, where $n(t)$ is the total number of obstacles found. Then, the set of all inadmissible UAV positions is defined by:
	\begin{align}
	\mathcal{O}(t) = \{ [\mathsf{x},\mathsf{y}]^T\in\mathbb{R}^2 : \sqrt{(x_o^j(t)-\mathsf{x})^2 + (y_o^j(t)-\mathsf{y})^2} \leq r_o^j, ~\forall j \in \mathcal{J}(t) \},
	\label{def::InadmissibleSet}
	\end{align} 
	where $[x_o^j(t),y_o^j(t)]^T \in \mathcal{S}(t)$ is the position of obstacle $j$ inside the sensing region and $r_o^j$ is the obstacle radius. Therefore, the UAV motion is constrained to the free space inside the sensing region $\mathcal{S}(t)$, which is defined as the set of all admissible UAV positions:
	\begin{align}
	\mathcal{F}(t) = \mathcal{S}(t) \cap \mathcal{O}(t)^\mathsf{C},
	\label{def::AdmissibleSet}
	\end{align}
	The sensing region and the admissible and inadmissible sets are illustrated in Figure \ref{fig::Spaces}. 
	
	\begin{figure}
		\centering
		\includegraphics[width=3.25in]{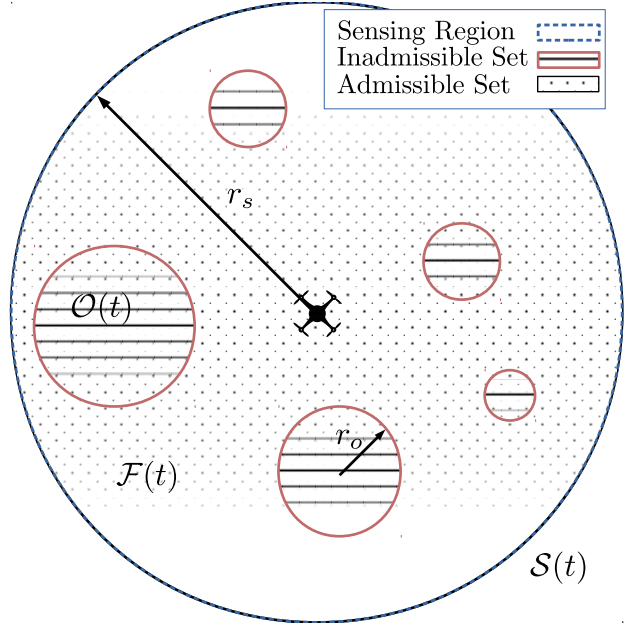}
		\caption{Diagram illustrating the spaces involved in the collision avoidance problem.}
		\label{fig::Spaces}
	\end{figure}
	
	The collision avoidance problem can be stated as follows:
	\begin{problem}
		Consider an UAV whose equations of motion are given in \eqref{eqn::SysInertial}, with a sensing radius $r_s$, moving through an environment with static and moving obstacles, such that $\mathcal{O}(t) \neq \emptyset$. Find the steering rate input $u(t)$ such that the trajectory generated by \eqref{eqn::SysInertial}, $\xi_{[t_i,T]}$, avoids collisions with any registered obstacles, while, at the same time, maintains the same course that it had before initiating the collision avoidance procedure:
		\begin{align*}
		\xi_{[t_i,T]} \in \mathcal{F}(t), ~\forall t \in [t_i,T] \text{ and } \psi(T) = \psi_d.
		\end{align*}
	\end{problem}	
	To solve this problem, a novel approach that determines inputs through a special class of vector fields that ensure collision avoidance and their corresponding controllers is proposed in the following sections. Note that the approach described in Section \ref{sec::CAVF} can be used for different system models and as such will be presented as an application for any autonomous agent. 

	\section{Collision Avoidance Vector Fields}
	\label{sec::CAVF}
	Consider an autonomous agent moving according to system \eqref{eqn::SysInertial} around an obstacle of radius $r_o$. Since collision avoidance requires a study of the relative motion between the agent and the obstacle, two coordinate frames are considered: one inertial frame, fixed to a point in space and determined by the unit vectors $\bm{e_x}$ and $\bm{e_y}$; and one moving frame, fixed at the obstacle geometric center and determined by unit vectors $\bm{e_r}$ and $\bm{e_\theta}$, as illustrated in Figure \ref{fig::coordinates}.
	
	To avoid collision with an obstacle, the agent's trajectory must not penetrate its surface at any point in time. This can be accomplished by ensuring that the agent's velocity along the line-of-sight to the obstacle is non-negative at the obstacle surface. A guidance law that achieves this behavior can be developed using a vector field with radial velocities that are non-negative at the obstacle surface. As such, the concept of a collision avoidance vector field is introduced.
	\begin{definition}{}
		Given a spherical obstacle of radius $r_o$ that is centered at $\bm{p_o} \in \mathbb{R}^2$, its collision avoidance vector field is defined as a spatially dependent vector field $h(\cdot) : \mathbb{R}^2 \to \mathbb{R}^2$ with the property that $\exists ~\alpha \geq 0$ such that 
		\begin{equation}
		\langle h(\bm{p_r}), \bm{e_r} \rangle \geq \alpha, ~\forall \bm{p_r} \in \mathbb{R}^2 \text{ that satisfies } \norm{\bm{p_r}-\bm{p_o}} = r_o.
		\label{eqn::cavf_def}
		\end{equation} 
		\label{def::CAVF}
	\end{definition}
	Any agent whose equations of motions determine a vector field that matches the collision avoidance vector field given in Definition \ref{def::CAVF} will move away from the obstacle surface in an attempt to avoid collisions with the particular obstacle.
	
	\subsection{Local Collision Avoidance Vector Field for a Single Obstacle}	
	In this section, an approach for generating parametrized collision avoidance vector fields around one static obstacle when the agent is supposed to move with constant speed, $V$, along a constant direction, determined by $\psi_d$, is proposed. Consider a static obstacle of radius $r_o$ located at $\bm{p_o} = [x_o,y_o]^T \in \mathcal{S}(t)$. Let $h_s(\bm{p}) = [\dot{r},r\dot{\theta}]^T$ be a spatially dependent vector field in the relative frame, where $\bm{p}\in \mathbb{R}^2$ is a point in space, $r = \norm{\bm{p}-\bm{p_o}}$, and $\theta \in \mathbb{R}$ is the angle that the line-of-sight between points $\bm{p}$ and $\bm{p_o}$ makes with the inertial $x$-axis of a coordinate system centered at $\bm{p_o}$, as illustrated in Figure \ref{fig::coordinates}. Since the obstacle is stationary, collision avoidance is achieved if $\exists ~\alpha \geq 0$ such that \eqref{eqn::cavf_def} is satisfied, which implies that $\dot{r} \geq 0$ at the obstacle surface $r = r_o$. Therefore, consider the following system:
	\begin{equation}
	\begin{aligned}
	\dot{r} &= -\lambda(r,\theta) V \cos \beta, \\
	\dot{\theta} &= -\sign(\sin\beta)\frac{1}{r}\sqrt{V^2-\dot{r}^2},
	\end{aligned}
	\label{eqn::CAVFstatic}
	\end{equation}
	where $V$ is the agent's speed, $\beta = \angle([\cos\psi_d,\sin\psi_d]^T,-\bm{e_r})$ is the angle between the desired trajectory direction and the line-of-sight to the obstacle, and $\lambda(\cdot,\cdot) : [r_o,\infty]\times \mathbb{R} \to [0,1]$ is a continuous function with the property that $\lambda(r_i,\theta) = 1$ and $\lambda(r_o,\theta) = 0$, $\forall \theta \in \mathbb{R}$. Let $\lambda(r,\theta)$ be defined as follows:
	\begin{align}
	\lambda(r,\theta) = \begin{cases}
	-\frac{2}{\pi} \Big(\gamma(r)(\psi_d -\theta) + \theta - \psi_d - \frac{\pi}{2}\Big), & \text{if } r_o \leq r \leq r_i, ~~(\theta-\psi_d) \Mod{2\pi} \in (0,\pi/2] \\
	\gamma(r), & \text{if } r_o \leq r \leq r_i, ~~ (\theta-\psi_d) \Mod{2\pi} \in (\pi/2,3\pi/2]\\
	\frac{2}{\pi} \Big(\gamma(r)(\psi_d -\theta) + \theta - \psi_d + \frac{\pi}{2}\Big), & \text{if } r_o \leq r \leq r_i, ~~ (\theta-\psi_d) \Mod{2\pi} \in (3\pi/2,2\pi]\\
	1, & \text{if } r_i < r, ~~ \theta \in \mathbb{R}
	\end{cases}
	\label{eqn::lambda}
	\end{align}
	where 
	\begin{align}
	\gamma(r) = \displaystyle\frac{a\Bigg(\displaystyle\frac{1}{r_o-r} - \frac{1}{r-r_i}\Bigg)} {\sqrt{1+\Bigg(2a\Bigg(\displaystyle\frac{1}{r_o-r} - \frac{1}{r-r_i}\Bigg)\Bigg)^2}} + 0.5.
	\label{eqn::gamma_r}
	\end{align}
	Parameters $a > 0$ and $r_i > r_o$ can be chosen such that the vector field smoothness and reactivity are influenced so that the desired motion patterns are achieved. In other words, the distance at which the CAVF becomes active is determined by $r_i$ and the transition between obstacle-free motion and collision avoidance is performed more abruptly or gradually depending on the choice of parameter $a$, as it can be seen in Figure \ref{fig::Varying_a}. Note further that $r_i$ defines a region of influence around the obstacle and any agent inside this region of influence is considered to perform collision avoidance maneuvers. The function $\lambda$ is used for two reasons: to generate a vector field that becomes tangential at the obstacle's surface and that circulates around the obstacle while trying to maintain the original heading $\psi_d$. As such, the provided function $\lambda$ achieves the first goal by artificially scaling the vector field as $r \to r_o$, since $\gamma(r_o) = 0$ and $\lambda(r_o,\theta) = 0$ implies that $\dot{r} = 0$ at $r = r_o$. Therefore, $\alpha = 0$ satisfies \eqref{eqn::cavf_def} and the equations of motion given in \eqref{eqn::CAVFstatic} describe a CAVF. Moreover, the second goal is achieved through a smooth transition, determined by $\lambda$, from obstacle-free motion to tangential motion which reverses direction as the obstacle is cleared. An obstacle is considered cleared when the agent velocity points away from the obstacle, $\langle \bm{V}, -\bm{e_r} \rangle < 0$, where $\bm{V}$ is the agent velocity vector. One sample vector field is illustrated in Figure \ref{fig::CAVFstatic}, where $\psi_d = 0$. 
	\begin{figure}[h!]
		\centering
		\begin{subfigure}{0.23\textwidth}
			\centering
			\includegraphics[width=1.6in]{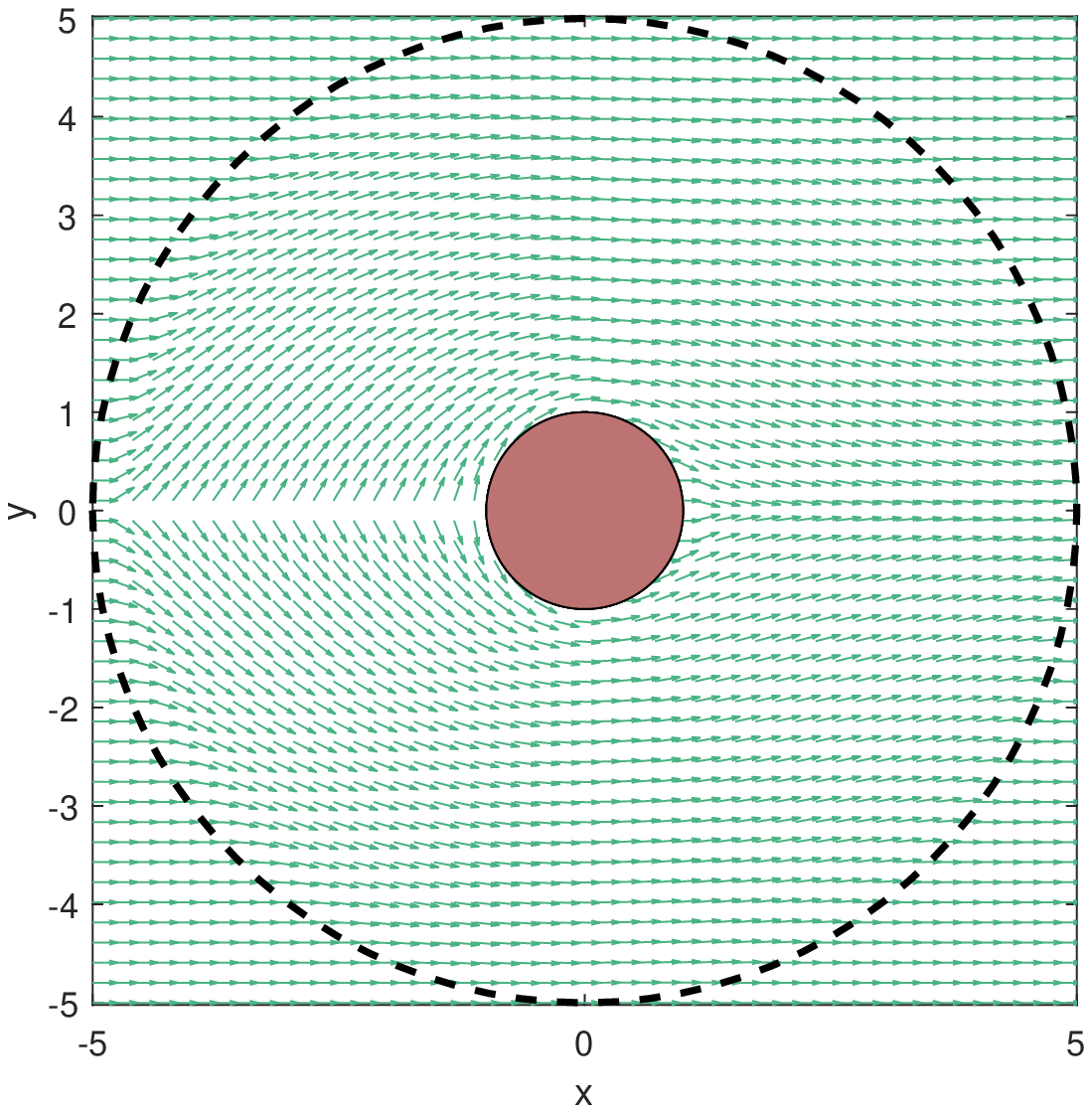}
			\caption{$a = 0.1$}
		\end{subfigure}\hspace{1 mm}
		\begin{subfigure}{0.23\textwidth}
			\centering
			\includegraphics[width=1.6in]{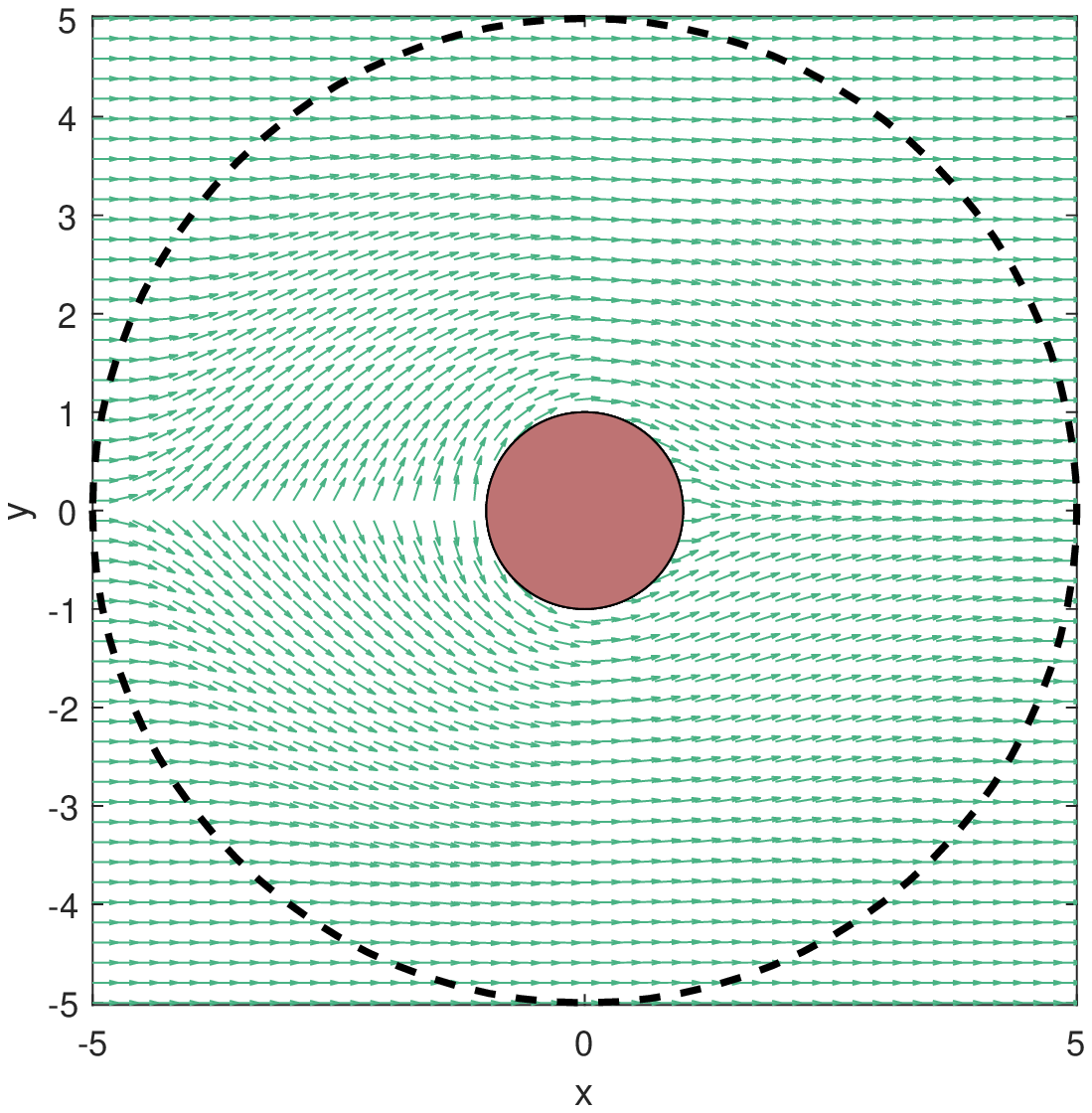}
			\caption{$a = 0.3$}
		\end{subfigure}\hspace{1 mm}
		\begin{subfigure}{0.23\textwidth}
			\centering
			\includegraphics[width=1.6in]{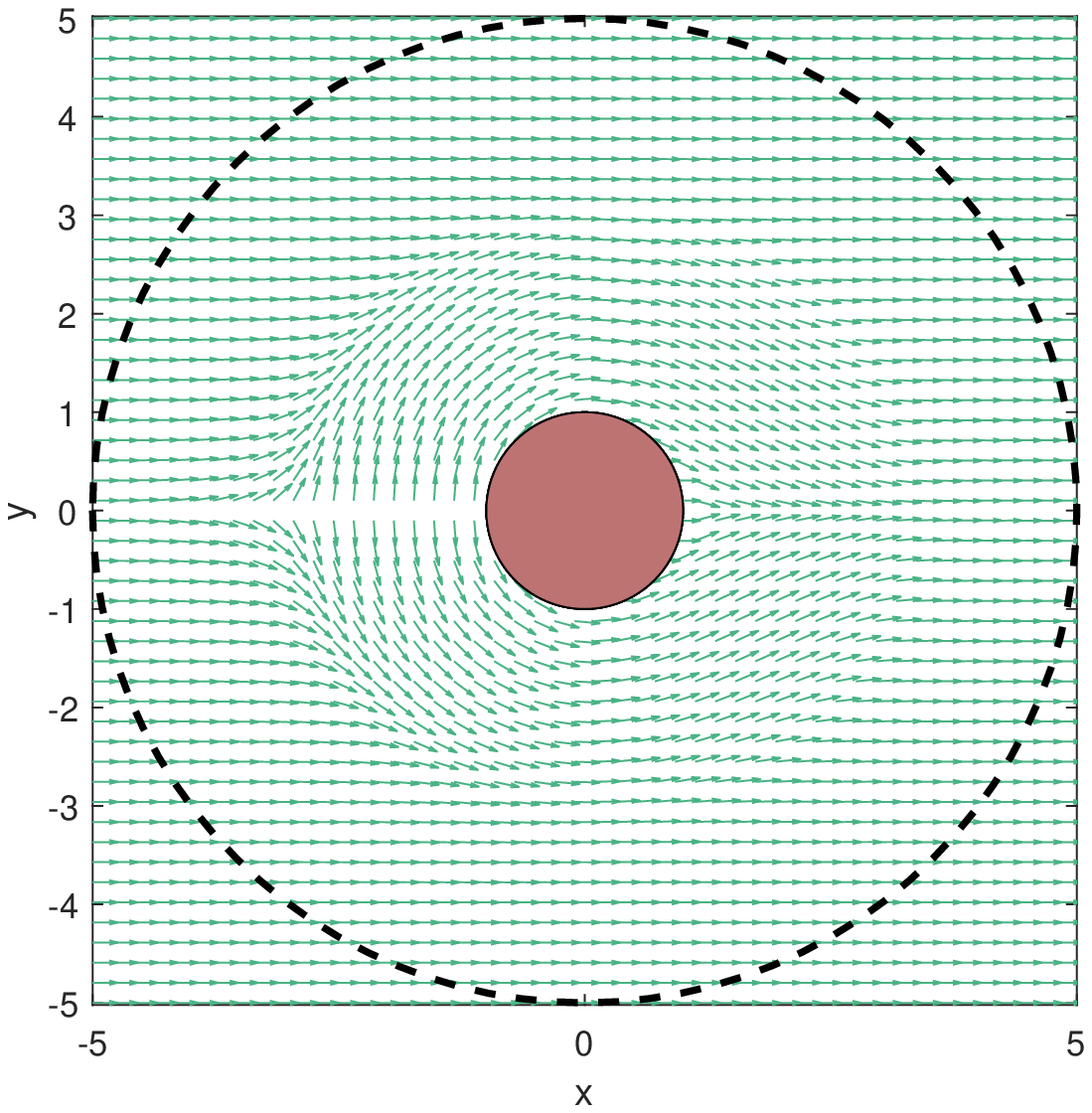}
			\caption{$a = 3$}
		\end{subfigure}\hspace{1 mm}
		\begin{subfigure}{0.23\textwidth}
			\centering
			\includegraphics[width=1.6in]{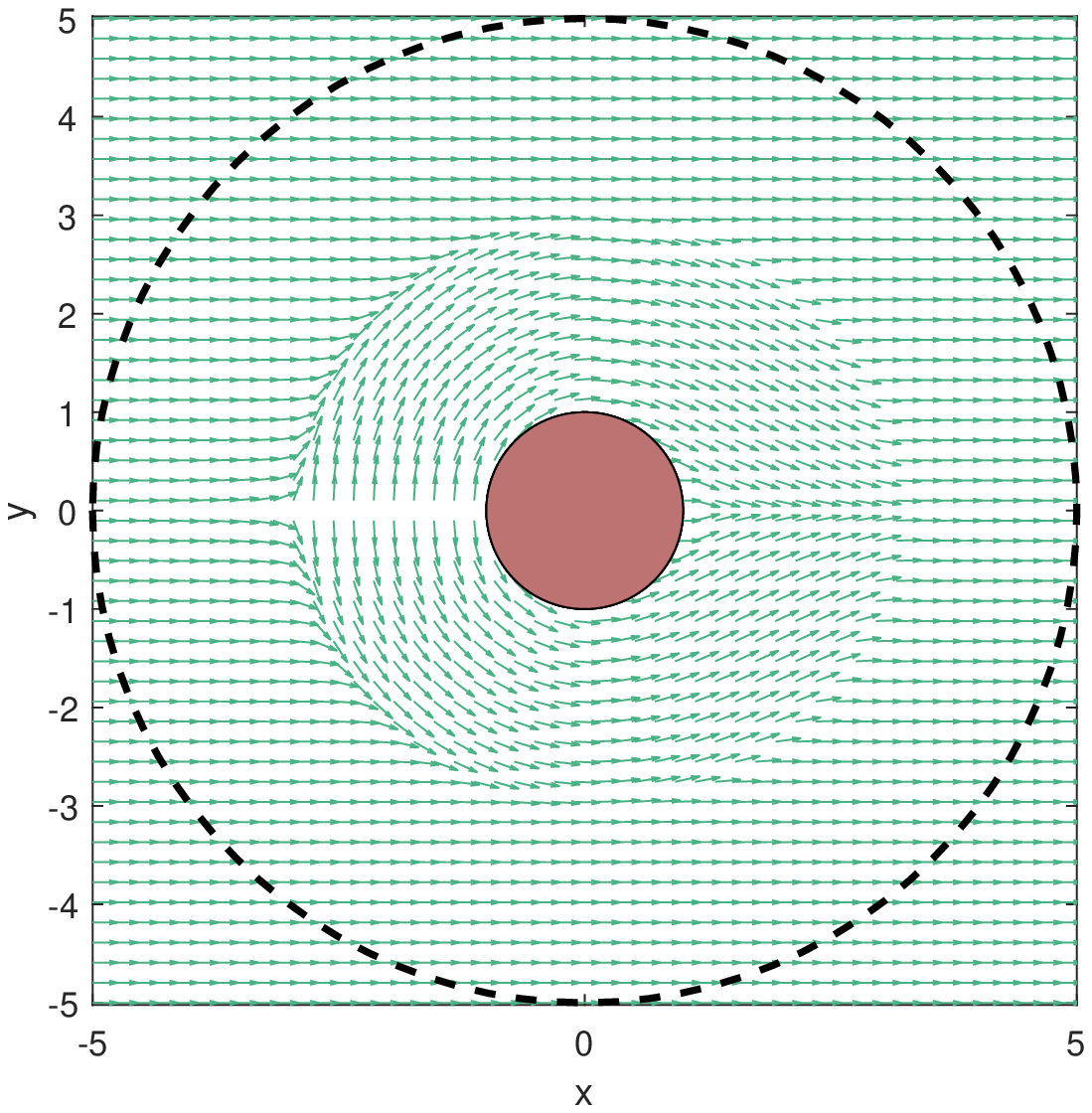}
			\caption{$a = 10$}
		\end{subfigure}
		\caption{Snapshots of a static obstacle CAVF, obtained for varying $a$ values, $\psi_d = 0, ~r_o = 1, ~r_i = 3$}
		\label{fig::Varying_a}	
	\end{figure}
	\begin{figure}[h!]
		\centering
		\begin{minipage}{0.45\textwidth}
			\centering
			\includegraphics[width=3.25in]{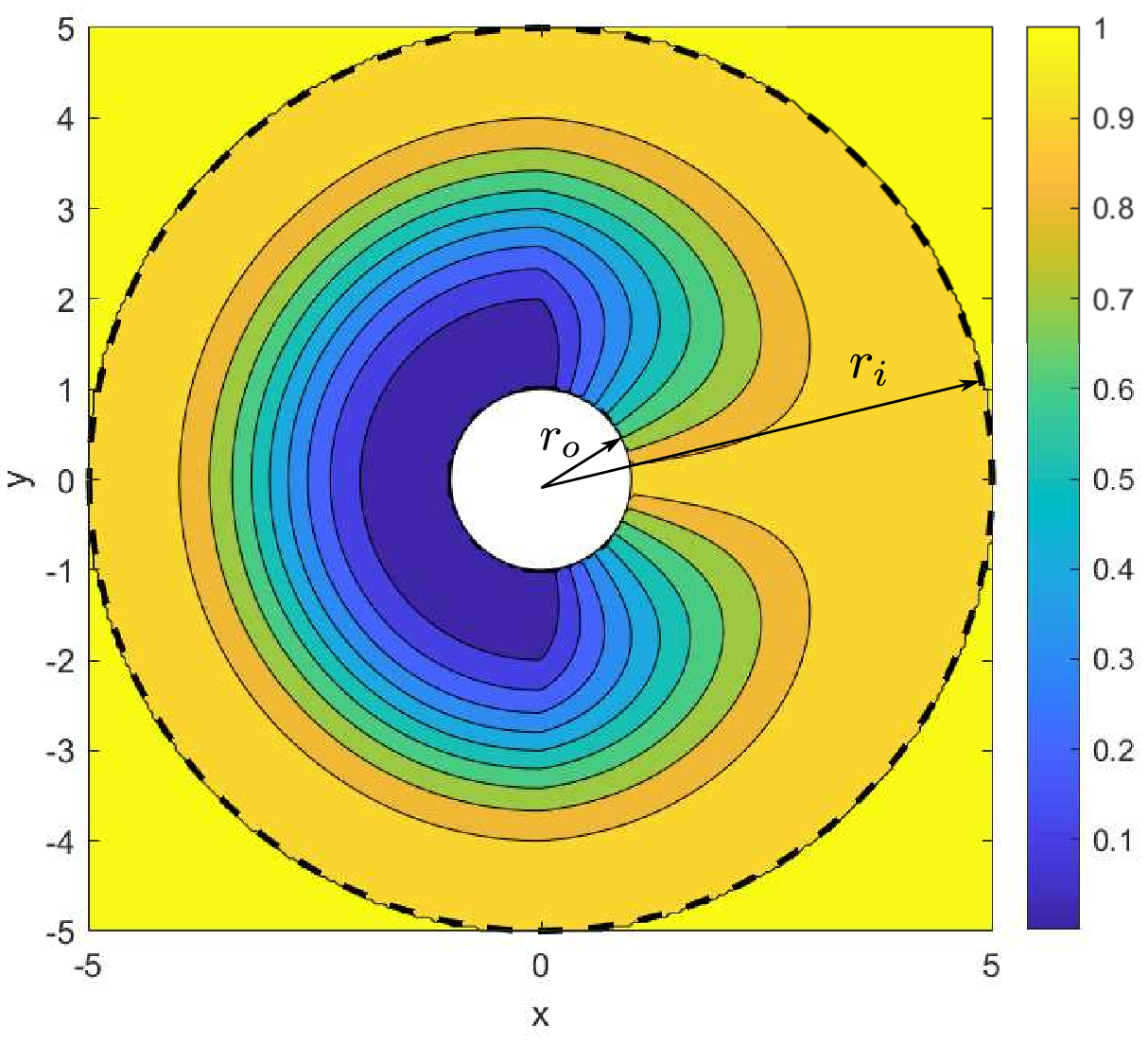}
			\caption{Contours of $\lambda(r,\theta)$, when $a = 1, ~\psi_d = 0, ~r_o = 1, ~r_i = 3$}
			\label{fig::lambda}
		\end{minipage}\hspace{5 mm}
		\begin{minipage}{0.45\textwidth}
			\centering
			\vspace{5mm}
			\includegraphics[width=2.52in]{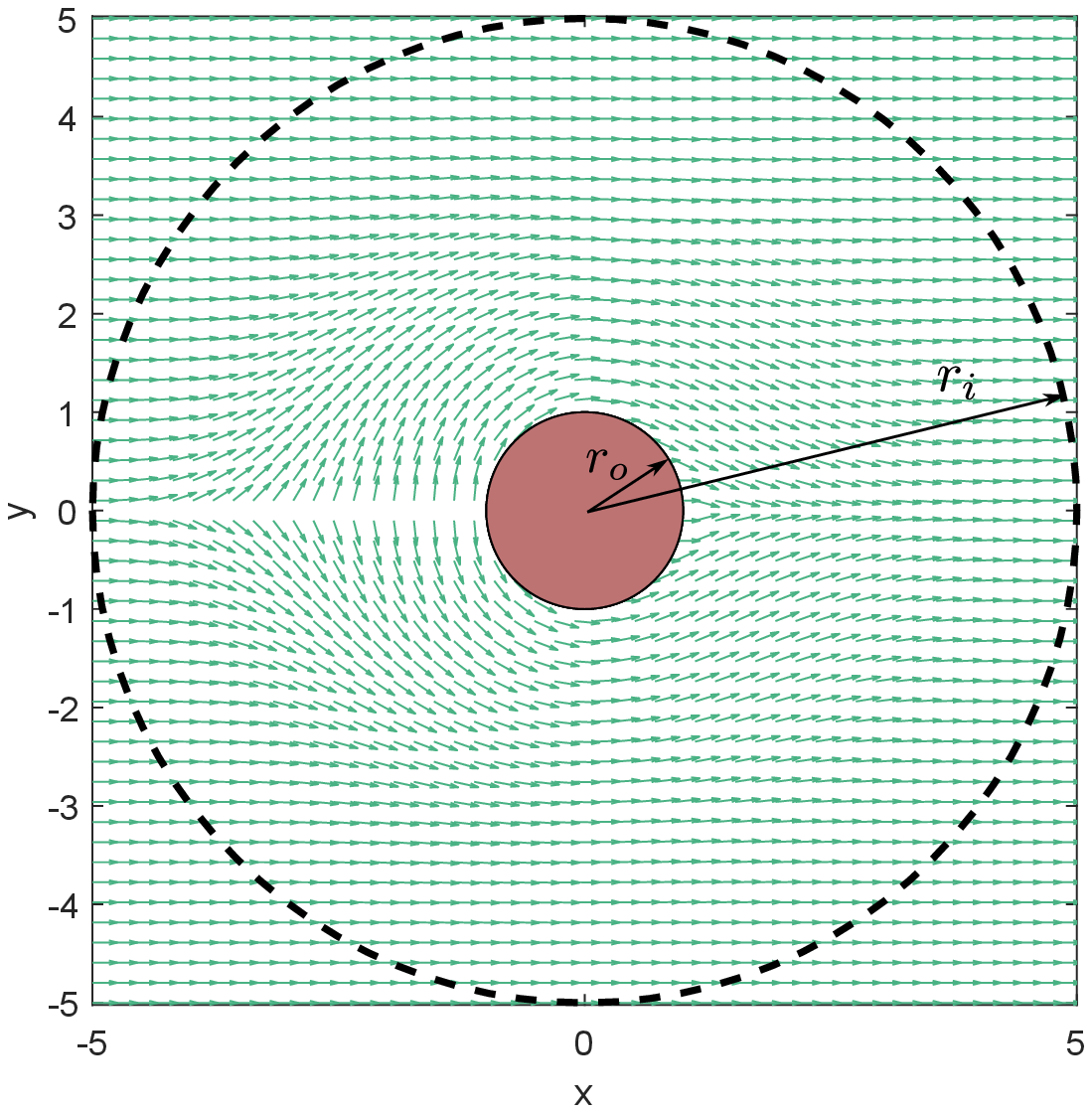}
			\caption{Snapshot of a static obstacle CAVF, obtained using $\lambda(r,\theta)$ with $a = 1, ~\psi_d = 0, ~r_o = 1, ~r_i = 3$ in Eq. \ref{eqn::CAVFstatic}}
			\label{fig::CAVFstatic}
		\end{minipage}
	\end{figure}
	
	Note that the CAVF for a single static obstacle given in \eqref{eqn::CAVFstatic} presents a singularity whenever $\sin\beta = 0$. This singularity defines a switching line, illustrated in Figure \ref{fig::CAVFstatic} by the line determined by $y=0$. The switching line represents the set of agent states from which two actions may be performed to avoid collisions and, as a result, two equivalent solutions to the collision avoidance problem exist. This singularity appears due to the $\sign(\cdot)$ function, which is discontinuous at $0$. It will be shown in the following sections how this issue can be resolved practically. Further, note that the agent's speed does not influence the property given in \eqref{eqn::cavf_def} and, as such, may be changed throughout its motion without compromising the collision avoidance properties of the system.
	
	Next, the results presented for a static obstacle are extended to the case when the obstacle is moving with constant velocity, denoted by $\bm{V_o} = V_o[\cos\theta_o,\sin\theta_o]^T$, where $V_o > 0$ and $\theta_o \in \mathbb{R}$ are the obstacle's speed and heading angle with respect to the inertial $x$-axis, respectively. To avoid collision with this moving obstacle, the agent should not reach the obstacle's surface with a radial speed that is smaller than $|\langle \bm{V_o}, \bm{e_r} \rangle|$. Therefore, an approach that will generate a similar parametrized CAVF around the moving obstacle, instantaneously in time, while trying to maintain as much as possible the original vector field direction, $\psi_d$, in the inertial frame, is proposed. Since the parametrized CAVF of system \eqref{eqn::CAVFstatic} generates a zero radial velocity, applying this in the moving frame will lead to a radial speed of $|\langle \bm{V_o}, \bm{e_r} \rangle|$ in the inertial frame, thus satisfying the requirement for collision avoidance. 
	
	Let $\bm{V_b}$ be the agent's velocity in the relative moving frame, as illustrated in Figure \ref{fig::coordinates}. To generate a similar parametrized CAVF, $^b\bm{V_b}$ must have the same form as $h_s(\bm{p})$ from \eqref{eqn::CAVFstatic}. Therefore, $^b \bm{V_b} = [\dot{r}, r\dot{\theta}]^T$, with:
	\begin{equation}
	\begin{aligned}
	\dot{r} &= -\lambda(r,\theta)V_b\cos\beta \\
	\dot{\theta} &= -\sign(\sin\beta)\frac{1}{r}\sqrt{V_b^2-\dot{r}^2},
	\end{aligned}
	\label{eqn::CAVFmoving}
	\end{equation}
	where $\beta = \angle( {^i\bm{V_b}}, -\bm{e_r} ) = \pi - (\theta - \psi_b)$ as seen in Figure \ref{fig::coordinates}. To maintain continuity between the obstacle-free vector field and the collision avoidance vector field, it is necessary to have:
	\begin{align}
	\psi_b = \atantwo( V\sin\psi_d - V_o\sin\theta_o, ~ V\cos\psi_d - V_o\cos\theta_o ),
	\end{align}
	since the corresponding equations of motion to \eqref{eqn::CAVFmoving} in the inertial frame are obtained from $^i \bm{V} = {^i\bm{V_b}} + {^i \bm{V_o}}$. This relative heading $\psi_b$ takes into account the obstacle's linear motion and preserves the agent's heading in the inertial frame at the interface between obstacle-free motion and collision avoidance. 
	
	Assume next that the agent's speed in the inertial frame is constant, denoted by $V$. Then, the magnitude of the relative velocity, $V_b$, can be found by solving the following equation: $\|{^i \bm{V_b}} + {^i \bm{V_o}}\| = V$. As such, system \eqref{eqn::CAVFmoving} can be implemented and the CAVF for a moving obstacle, $h_d(\cdot)$, is determined as follows:
	\begin{align}
	h_d(\bm{p}) = \bm{R}(\theta) ~{^b\bm{V_b}} + {^i\bm{V_o}},
	\label{eqn::cavf_moving_rotation}
	\end{align}
	where $\bm{R}(\theta)$ is the standard rotation matrix:
	\begin{align}
	\bm{R}(\theta) = \begin{bmatrix} 
	\cos(\theta) & -\sin(\theta) \\
	\sin(\theta) & \cos(\theta)
	\end{bmatrix}.
	\end{align}
	If perfect tracking of this CAVF is achieved, then $\dot{r} = 0$ or $h_d(\bm{p}) = \langle \bm{V_o}, \bm{e_r} \rangle$ whenever $r = r_o$, which implies that the particle will not penetrate the obstacle boundary, thus avoiding collision. A sample vector field is illustrated in Figure \ref{fig::CAVFmoving} for a moving obstacle. The generated CAVF is rotated around the obstacle with angle $\psi_b$, which depends on both the obstacle's and agent's velocity, in order to allow for maneuvers that align with the obstacle motion and still return to the original heading $\psi_d$. Moreover, the CAVF presents strictly positive radial speed almost everywhere on the obstacle surface to avoid the possible collision. As a result, the CAVF would guide an agent found near the obstacle's surface along its direction of motion, by steering it almost in the same direction with the obstacle's velocity to avoid the immediate collision, after which the CAVF would steer the agent around the obstacle for its return to the original course $\psi_d$.
	
	\begin{figure}[h!]
		\centering
		\includegraphics[width=3.25in]{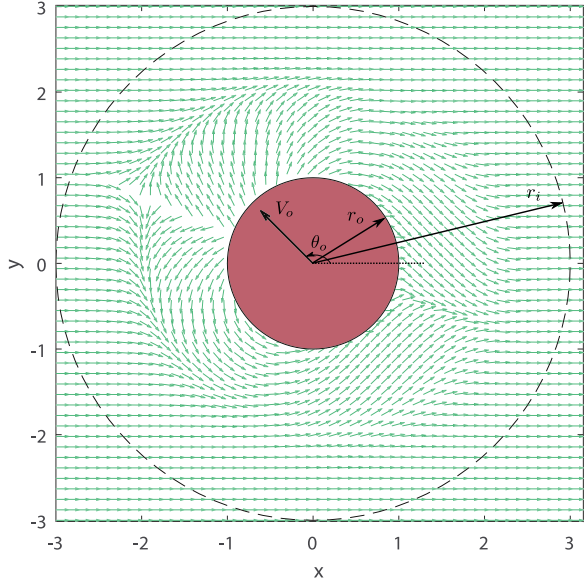}
		\caption{An instantaneous snapshot of the moving obstacle CAVF at $t = 0$, in the inertial frame, obtained using $\lambda(r,\theta)$ with $a = 1, ~\psi_d = 0, ~r_o = 1, ~r_i = 3, ~V_o = 0.9, ~\theta_o = 2.35$ in Eq. \eqref{eqn::CAVFmoving}}
		\label{fig::CAVFmoving}
	\end{figure}

	\subsection{Mixed Collision Avoidance Vector Fields for Multiple Obstacles}
	Suppose next, that multiple obstacles were identified by the agent's sensors and that there is a minimum separation distance between them, denoted by $\delta$. Furthermore, suppose that the collision avoidance parameters are chosen such that the obstacles' radii of influence lead to overlapping CAVFs. In this case, following the streamlines of one CAVF may lead to collision with other obstacles intersecting them. As such, the CAVFs must be mixed in a judicious way so that collision avoidance is still guaranteed. A mixed CAVF is now defined with respect to multiple obstacles.
	
	\begin{definition}
		Suppose there are $n$ obstacles identified by the index set $\mathcal{J}(t) = \{1,2,\dots,n(t)\}$ with overlapping radii of influence, where each one is located at $\bm{p_o}^j \in \mathcal{S}(t)$ and has radius $r_o^j > 0$, for $j \in \mathcal{J}(t)$ such that $\|\bm{p_o}^j - \bm{p_o}^k\| > r^j_o + r^k_o + \delta, ~\forall j,k \in \mathcal{J}(t)$, with $j\neq k$. Their mixed collision avoidance vector field is defined as a spatially dependent vector field $h(\cdot) : \mathbb{R}^2 \to \mathbb{R}^2$ with the property that for all $j \in \mathcal{J}(t)$ there exists $\alpha^j \geq 0$ such that $\langle h(\bm{p_r}), \bm{e_r}^j \rangle \geq \alpha^j$, $\forall \bm{p_r} \in \mathbb{R}^2$ that satisfies $\|\bm{p_r} - \bm{p_o}^j\| = r_o^j$.
		\label{def::MixCAVF}
	\end{definition}
	
	To generate one such CAVF, a method that mixes CAVFs by computing their weighted sum, while preserving magnitude, is proposed in Algorithm \ref{algo::1}. The weights associated with each vector field, denoted by $w^j$, are determined by the agent's distance to each obstacle in $\mathcal{J}(t)$, which is denoted in Algorithm \ref{algo::1} by $\Delta^j, ~\forall j \in \mathcal{J}(t)$. The idea behind this proximity metric comes from the fact that the agent may follow two different motion plans or a combination of the two: its obstacle-free motion plan or the collision avoidance maneuver proposed by an isolated CAVF, depending on how close it is to a particular obstacle surface. As such, if the agent is outside any obstacle's radius of influence, it will continue its original motion plan without any influence from the CAVFs. As the agent enters the radius of influence of one or more obstacles and approaches the obstacles' surfaces, it must perform collision avoidance with respect to all influencing obstacles. Therefore, the proximity metric maps the agent's position into the weight range set $[0,1]$, where a zero weight value is associated with obstacle-free motion whenever the agent is outside the obstacle radius of influence, while a weight value of $1$ is associated with tangential motion around a single isolated obstacle whenever the agent is at that particular obstacle's surface. This mapping is given in lines $2-13$ of Algorithm \ref{algo::1}. Note that here we used the assumption that obstacles cannot overlap in boundary. 
	
	Next, consider the case when the agent is closer to obstacle $\text{ind} \in \mathcal{J}(t)$ than to others. Then, let the value of the weight associated with that obstacle be $\text{val} = w^\text{ind} \geq w^j, ~\forall j \in \mathcal{J}(t)$. If this weight value is within a predefined weight threshold denoted $\epsilon_m$, $\text{val} > \epsilon_m$, where $0 \ll \epsilon_m < 1$, the mixed CAVF will be identical to the CAVF of obstacle $\text{ind}$; otherwise, all weights are normalized. This procedure is illustrated in lines $14-24$ of Algorithm \ref{algo::1} and is required to perform the avoidance maneuver only with respect to one obstacle, since otherwise, mixing CAVFs may not guarantee collision avoidance. 

	\begin{algorithm}
		\caption{Mixing CAVFs}\label{euclid}
		\begin{algorithmic}[1]
			\Statex {\bf Inputs: } $r, ~V, ~\psi_d, ~r_o^j, ~V_o^j, ~\theta_o^j, ~a^j, ~r_i^j ~\forall j \in \mathcal{J}(t)$
			\Statex {\bf Outputs: } $w^j, ~h(\bm{p})$
			\Statex 
			\State $\Sigma_\Delta = 0$
			\For {each obstacle $j$}
				\State Compute $h^j(\bm{p})$ using \eqref{eqn::CAVFstatic} or \eqref{eqn::CAVFmoving}
				\State Compute $\Delta^j = \begin{cases}
					r - r_o^j, & \text{if } r-r^j_i < 0 \\
					-1, & \text{otherwise.}
				\end{cases}$
				\State $\Sigma_\Delta = \Sigma_\Delta + \Delta^j (\Delta^j > 0)$
			\EndFor
			\For {each obstacle $j$}
				\If {$\Sigma_\Delta = \Delta^j$}
					\State $w^j = 1$				
				\Else
					\If {$\Delta^j > 0$}
						\State $w^j = 1 - \Delta^j / \Sigma_\Delta$
					\Else
						\State $w^j = 0$
					\EndIf				
				\EndIf				
			\EndFor
			\State $[\text{val},\text{ind}] = \max_j {w^j}$
			\State $\text{sum}_w = \sum_{j} w^j$
			\If {$\text{val} > \epsilon_m$}
				\State $w^j = 0, ~\forall j \neq \text{ind}$
				\State $w^\text{ind} = 1$
			\Else 
				\If {$\text{sum}_w = 0$}
					\State $w^j = 1, ~\forall j$
				\Else
					\For {each weight $j$}
						\State $w^j = w^j / \text{sum}_w$
					\EndFor
				\EndIf
			\EndIf
			\State $h(\bm{p}) = \sum_{j} w^j h^j(\bm{p})$
		\end{algorithmic}
		\label{algo::1}
	\end{algorithm}

	\begin{proposition}
	Algorithm \ref{algo::1} generates a mixed CAVF. 
	\end{proposition}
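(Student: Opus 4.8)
The plan is to check the defining inequality of Definition~\ref{def::MixCAVF} directly for the output $h(\bm{p}) = \sum_{j} w^j h^j(\bm{p})$ of Algorithm~\ref{algo::1}. Fix an index $j \in \mathcal{J}(t)$ and a boundary point $\bm{p_r}$ with $\norm{\bm{p_r} - \bm{p_o}^j} = r_o^j$; I must exhibit $\alpha^j \geq 0$ with $\langle h(\bm{p_r}), \bm{e_r}^j \rangle \geq \alpha^j$. The strategy is to prove that on the surface of obstacle $j$ the mixing weights collapse, so that $h$ reduces exactly to the single-obstacle field $h^j$. Once that reduction is established the conclusion is immediate, since each $h^j$ is, by the single-obstacle constructions \eqref{eqn::CAVFstatic}--\eqref{eqn::CAVFmoving}, a collision avoidance vector field for obstacle $j$ in the sense of Definition~\ref{def::CAVF}, so that $\langle h^j(\bm{p_r}), \bm{e_r}^j \rangle \geq \alpha^j \geq 0$ there (indeed $\lambda(r_o^j,\theta) = 0$ makes the radial part vanish on the surface in the static case).

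To obtain the collapse I would first evaluate the gaps of line~4 at $\bm{p_r}$. Since $r^j := \norm{\bm{p_r}-\bm{p_o}^j} = r_o^j < r_i^j$, we have $\Delta^j = 0$, whereas the separation hypothesis $\norm{\bm{p_o}^j - \bm{p_o}^k} > r_o^j + r_o^k + \delta$ together with the triangle inequality forces $r^k > r_o^k + \delta$ for every $k \neq j$, hence $\Delta^k > \delta$ whenever obstacle $k$ is active and $\Delta^k = -1$ otherwise. Thus $\Delta^j$ is the unique smallest nonnegative gap and $\Sigma_\Delta$ is bounded below by $\delta$. Reading line~11 along points approaching $\bm{p_r}$ from outside gives $w^j = 1 - \Delta^j/\Sigma_\Delta \to 1$, while every competing active weight obeys $w^j - w^k = (\Delta^k - \Delta^j)/\Sigma_\Delta > 0$; therefore $j$ is the strict maximizer in line~14 and $\mathrm{val} = w^j \to 1$.

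It then remains to run the threshold logic of lines~$14$--$24$. Because $\mathrm{val} > \epsilon_m$ with $0 \ll \epsilon_m < 1$, the first branch sets $w^{\mathrm{ind}} = w^j = 1$ and $w^k = 0$ for all $k \neq j$, so that $h(\bm{p_r}) = h^j(\bm{p_r})$ on the surface of obstacle $j$ and the desired inequality follows as above. I expect the weight bookkeeping near the surface to be the crux: one must use the separation distance $\delta$ not merely to make $w^j \to 1$ in the limit, but to guarantee uniformly on a neighborhood of each surface that $j$ attains the maximum weight and that this weight exceeds $\epsilon_m$, so the $\epsilon_m$ branch actually fires and the mixed field is locked onto the single CAVF $h^j$ before the surface is reached. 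Controlling the case of overlapping radii of influence — ruling out any competing obstacle from dominating the mixture at $\bm{p_r}$ — is exactly where the minimum separation assumption does the essential work, and is the step I would treat most carefully.
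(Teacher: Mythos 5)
Your proposal is correct and takes essentially the same route as the paper's proof: both reduce the claim to showing that on the surface of obstacle $j$ the weights of Algorithm~\ref{algo::1} collapse to $w^j = 1$ and $w^k = 0$ for $k \neq j$, so that $h(\bm{p_r}) = h^j(\bm{p_r})$ and the single-obstacle CAVF property of Definition~\ref{def::CAVF} gives the required inequality. You merely spell out the weight bookkeeping (the role of the separation $\delta$ and of the $\epsilon_m$ threshold) that the paper asserts in a single line.
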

	\begin{proof}
		Consider $n(t)$ obstacles with overlapping radii of influence determined by the index set $\mathcal{J}(t) = \{1,2,\dots,n(t)\}$. Suppose that $\bm{p} \in \mathcal{S}(t)$ is such that $\|\bm{p}-\bm{p_o}^j\| \leq r^j_i, ~\forall j \in\mathcal{J}(t)$. 
		Whenever $\|\bm{p}-\bm{p_o}^k\| = r^k_o$ for some $k \in \mathcal{J}(t)$, applying Algorithm \ref{algo::1} will result in $w^k = 1$ and $w^j = 0, ~\forall j\neq k$, which implies that $h(\bm{p}) = h^k(\bm{p})$. Therefore, by Definition \ref{def::MixCAVF}, $h(\bm{p})$ is a mixed CAVF. 
	\end{proof}
	The application of Algorithm \ref{algo::1} will generate a mixed CAVF that will present $n$ singularities or switching lines, as can be seen in the sample mixed CAVF illustrated in Figure \ref{fig::CAVFmix}. In these situations, when the CAVF becomes singular, the flight control system can choose one of the two equivalent solutions to the collision avoidance problem, that is, to turn left or right, by applying a small user-defined correction which will force the agent to follow the CAVF on one side of the switching line. More details about the form of this control correction are provided in the next section. 
	\begin{figure}
		\centering
		\includegraphics[width=3.25in]{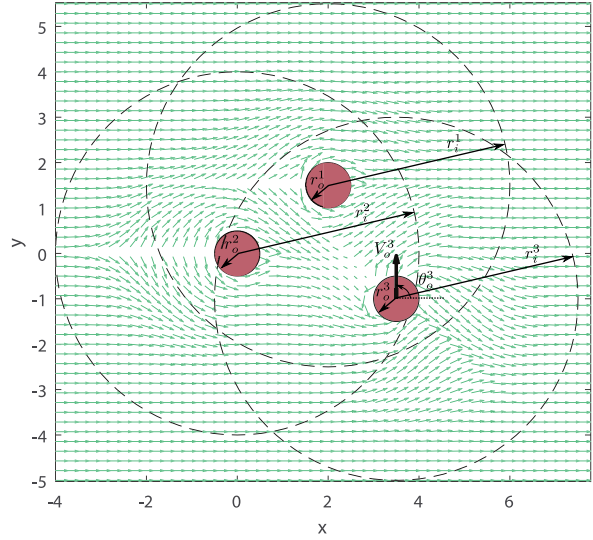}
		\caption{An instantaneous snapshot of the mixed obstacles CAVF at $t = 0$, in the inertial frame, for two static obstacles and one moving obstacle, with the following parameters: $a = 1, ~r_o = 1,~ r_i = 2, ~ V_o = 0.9, ~\theta_o = \pi/2$.}
		\label{fig::CAVFmix}
	\end{figure}
	
	\section{UAV Vector Field Controller}
	\label{sec::Control}
	In this section, an approach for using the CAVFs from Section \ref{sec::CAVF} to generate a solution to the collision avoidance problem presented in Section \ref{sec::Problem} is presented. The approach will be described incrementally, first demonstrating how the local CAVF for a stationary obstacle can be converted into a steering control input. Then it will be shown how the CAVF for a moving obstacle extends naturally the method proposed for a stationary obstacle, using the relative kinematics between the UAV and the obstacle. Lastly, it is shown how a simple tracking controller may be used for the mixed CAVFs while providing at the same time convergence guarantees to the desired collision avoidance maneuvers.
	
	\subsection{Collision Avoidance Controller for a Single Obstacle}
	Consider an UAV moving around a static obstacle of radius $r_o$, located at $\bm{p_o} = [x_o,y_o]^T$. Suppose that the UAV starts performing the collision avoidance maneuver when it is at a distance $r_i > r_o$ from the obstacle geometric center, that is $\| \bm{p}(t_i) - \bm{p}_o \| = r_i$. It is assumed that the UAV's heading matches the desired heading $\psi_d$, if it is at a distance greater than $r_i$, and no steering is required, $u(t) = 0$. Converting the equations of motion of an UAV \eqref{eqn::SysInertial} into polar coordinates results in the following form:
	\begin{equation}
	\begin{aligned}
	\dot{r}(t) &= -V \cos\phi(t), \quad &r(t_i) &= r_i, \\
	\dot{\theta}(t) &= -\frac{V}{r(t)} \sin\phi(t), \quad &\theta(t_i) &= \atantwo{(y(t_i)-y_o, x(t_i)-x_o)},\\
	\dot{\phi}(t) &= u_{s}(t), \quad &\phi(t_i) &= \phi_i, 
	\end{aligned}
	\label{eqn::SysRelativeStatic}
	\end{equation}
	where $r(t) = \norm{\bm{p}(t)-\bm{p_o}}$ is the relative distance between the UAV and the obstacle geometric center, $\theta(t) \in \mathbb{R}$ is the angle between the inertial $x$-axis and the line-of-sight to the obstacle, $\phi(t) \in \mathbb{R}$ is the angle between the line-of-sight and the agent velocity vector, and $u_{s}(t) \in \mathbb{R}$ is the new control input. By inspecting Figure \ref{fig::coordinates}, 
	\begin{align}
	\displaystyle\beta(t) = \angle(^i\bm{V}(t_i),\bm{e_r}(t)), ~\forall t \in [t_i,t_f],
	\label{eqn::beta}
	\end{align}
	where $t_f > t_i$ is the time at which the UAV exits the region of influence of the considered obstacle. Therefore, the following initial condition for $\phi(t)$ is obtained: $\phi_i = \beta(t_i) = \angle{(^i\bm{V}(t_i),\bm{e_r}(t_i))}$. This initial condition guarantees continuity at the maneuver transfer between the high-level plan and the low-level avoidance of the CAVF by making sure that the collision avoidance velocity vector is aligned with the obstacle-free velocity vector. Next, to determine the control input $u_s(t)$ required to achieve the behavior of \eqref{eqn::CAVFstatic}, the following relation, which results from the equivalency between system \eqref{eqn::CAVFstatic} and system \eqref{eqn::SysRelativeStatic}, is considered:
	\begin{align}
	V \cos\phi(t) = \lambda(r,\theta)V \cos\beta(t).
	\label{eqn::CAconstraint}
	\end{align} 
	By differentiating \eqref{eqn::CAconstraint} with respect to time and using the definition of $\beta$ provided in \eqref{eqn::beta}, the following equation is obtained:
	\begin{align}
	\dot\phi(t) = \frac{\dot{\lambda}(t)\cos\big(\theta(t)-\psi_d\big) - \lambda(t) \dot{\theta}(t) \sin\big(\theta(t)-\psi_d\big)}{\sin\phi(t)},
	\label{eqn::phidot}
	\end{align}
	and since $\dot\phi(t) = u_s(t)$, the control input must be equal to the right hand side of \eqref{eqn::phidot}. Here, the notation $\lambda(t)$ is used to denote the implicit time-dependence of $\lambda$ and $\dot\lambda(t)$ is used to denote the total derivative of $\lambda$; in particular, $\lambda(t) := \lambda(r(t),\theta(t))$ and $\dot{\lambda}(t) := \frac{\partial \lambda}{\partial r}\frac{dr}{dt} + \frac{\partial \lambda}{\partial \theta}\frac{d\theta}{dt}$. Moreover, $\dot\phi(t)$ is singular whenever $\sin\phi(t) = 0$, corresponding to the case when the UAV is moving along the line-of-sight to the obstacle and may avoid collision by either turning left or right. As such, the control input that guarantees collision avoidance exists but it is not unique. Hence a small user-defined correction may be performed whenever $\phi(t) = k\pi$, where $k \in \mathbb{Z}$, by replacing $1/\sin\phi(t)$ in \eqref{eqn::phidot} with $k_{\vartheta} = 1/\sin\vartheta$, where $0 < |\vartheta| \ll \pi/2$. This change corresponds to a small deviation from the current path, which will trigger the collision avoidance on one side or another of the obstacle. Thus, the following control input for system \eqref{eqn::SysRelativeStatic} is considered:
	\begin{align}
	u_s(t) = \begin{cases}
	k_{\vartheta} \left( \dot{\lambda}(t)\cos\big(\theta(t)-\psi_d\big) - \lambda(t) \dot{\theta}(t) \sin\big(\theta(t)-\psi_d\big) \right), & \text{if } \phi(t) = k\pi \\
	\displaystyle\frac{1}{\sin\phi(t)}\left(\dot{\lambda}(t)\cos\big(\theta(t)-\psi_d\big) - \lambda(t) \dot{\theta}(t) \sin\big(\theta(t)-\psi_d\big)\right), & \text{otherwise.}
	\end{cases}
	\label{eqn::u_s}
	\end{align}
	The control input given in \eqref{eqn::u_s} can be mapped into inertial coordinates by a simple transformation between system \eqref{eqn::SysInertial} and system \eqref{eqn::SysRelativeStatic} given by $^b\bm{V}(t) = \bm{R}(\theta(t)) {^i\bm{V}(t)}$. From this transformation, the following relation is obtained:
	\begin{align*}
	V\cos\psi(t) &= -V \cos(\phi(t) + \theta(t)), \\
	V\sin\psi(t) &= -V \sin(\phi(t) + \theta(t)),
	\end{align*}
	from which, the inertial control input for collision avoidance is determined: 
	\begin{align}
	u(t) = \dot\phi(t) + \dot\theta(t) = u_s(t) + \dot\theta(t).
	\label{eqn::InertialControlSingleStaticObst}
	\end{align}
	Therefore, if the initial conditions of system \eqref{eqn::SysInertial} are such that \eqref{eqn::CAconstraint} is satisfied, then the application of \eqref{eqn::InertialControlSingleStaticObst} will generate paths that follow perfectly the streamlines of the static obstacle CAVF, as illustrated in Figure \ref{fig::CAVFstatic_stream}. The resulting trajectories show how the UAV should maneuver around the obstacle to avoid collision and return to its original heading course. Note also that the farther the agent moves away from the switching line, the less it is required to maneuver in order to avoid collision, as it can be observed in the top and bottom trajectories. 
	\begin{figure}[h!]
		\centering
		\includegraphics[width=3.25in]{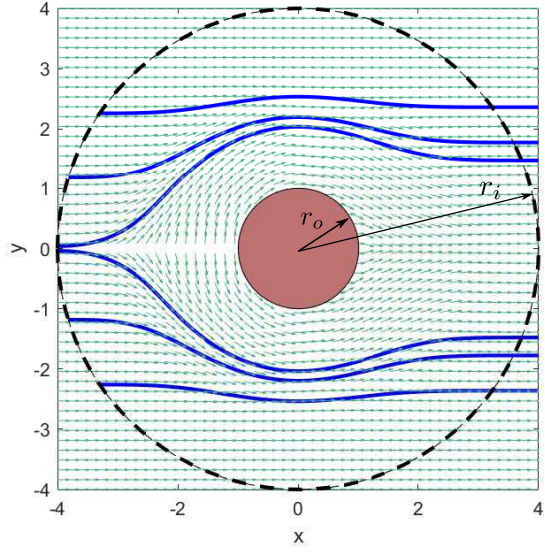}
		\caption{System \eqref{eqn::SysInertial} trajectories (blue) generated using \eqref{eqn::InertialControlSingleStaticObst}, for a CAVF (red vector field) with the following parameters $a = 1, ~r_i = 3, ~r_o = 1, ~\psi_d = 0$.}		
		\label{fig::CAVFstatic_stream}
	\end{figure}	
	
	Next, consider an UAV located near a moving obstacle. Suppose, as before, that the UAV starts performing the collision avoidance maneuver when it reaches a distance $r_i$ from the obstacle geometric center. Converting system \eqref{eqn::SysInertial} into a relative polar coordinate frame results in: 
	\begin{equation}
	\begin{aligned}
	\dot{r}(t) &= -V_b(t) \cos\phi(t), \quad &r(t_i) &= r_i, \\
	\dot{\theta}(t) &= -\frac{V_b(t)}{r(t)} \sin\phi(t), \quad &\theta(t_i) &= \atantwo{(y(t_i)-y_o, x(t_i)-x_o)},\\
	\dot{\phi}(t) &= u_{d}(t), \quad &\phi(t_i) &= \phi_i, 	
	\end{aligned}
	\label{eqn::SysRelativeMoving}
	\end{equation}
	where $V_b(t)$ is the UAV speed in the relative frame of motion and satisfies the following set of equations obtained from mapping system \eqref{eqn::SysRelativeMoving} into the inertial frame and equating it to system \eqref{eqn::SysInertial}:
	\begin{equation}
	\begin{aligned}
	V\cos\psi(t) &= -V_b(t) \cos(\phi(t)+\theta(t)) + V_o\cos\theta_o,\\
	V\sin\psi(t) &= -V_b(t) \sin(\phi(t)+\theta(t)) + V_o\sin\theta_o.
	\end{aligned}
	\label{eqn::Inertial2Polar}
	\end{equation}
	To maintain continuity at the boundary of the CAVF, between the collision avoidance maneuver and obstacle-free motion, the UAV's velocity must satisfy \eqref{eqn::Inertial2Polar} and, as such, the initial condition for $\phi(t)$ is given by:
	\begin{align}
	\phi_i = \atantwo\big(V_o\sin\theta_o-V\sin\psi_d, ~V_o\cos\theta_o-V\cos\psi_d\big) - \theta(t_i).
	\label{eqn::phi_i}
	\end{align}
	Next, to determine the control input $u_d(t)$ required to achieve similar motion patterns that CAVF \eqref{eqn::CAVFmoving} provides, the following relation, which results from the equivalency between system \eqref{eqn::SysRelativeMoving} and system \eqref{eqn::CAVFmoving}, is considered:
	\begin{align}
	V_b(t) \cos\phi(t) = \lambda(r,\theta)V_b(t)\cos\beta(t),
	\label{eqn::CAconstraintmoving}
	\end{align}
	where $\beta(t) = \langle \bm{V_b}(t_i),\bm{e_r}(t) \rangle$. Differentiating \eqref{eqn::CAconstraintmoving} with respect to time and using the definition of $\beta$ given in \eqref{eqn::beta}, determines the following equation:
	\begin{align}
	\dot\phi(t) = \frac{\dot{\lambda}(t)\cos\big(\theta(t)-\psi_b\big) - \lambda(t) \dot{\theta}(t) \sin\big(\theta(t)-\psi_b\big)}{\sin\phi(t)}.
	\label{eqn::phidot_moving}
	\end{align}
	As previously noted, the right-hand side of \eqref{eqn::phidot_moving} is not well-defined whenever $\phi(t) = k\pi$, for $k \in \mathbb{Z}$. Therefore, the following non-singular control input may be used to generate paths that follow perfectly the streamlines of the dynamic CAVF, as long as the initial conditions agree with \eqref{eqn::phi_i}:
	\begin{align}
	u_d(t) = \begin{cases}
	k_{\vartheta} \left( \dot{\lambda}(t)\cos\big(\theta(t)-\psi_b\big) - \lambda(t) \dot{\theta}(t) \sin\big(\theta(t)-\psi_b\big) \right), & \text{if } \phi(t) = k\pi \\
	\displaystyle\frac{1}{\sin\phi(t)}\left( \dot{\lambda}(t)\cos\big(\theta(t)-\psi_b\big) - \lambda(t) \dot{\theta}(t) \sin\big(\theta(t)-\psi_b\big) \right), & \text{otherwise.}
	\end{cases}
	\label{eqn::u_d}
	\end{align}
	
	Further, to have continuity between the control input for obstacle-free motion and for collision avoidance, the agent's desired heading in the moving frame must compensate for the obstacle motion. As such, making the direction of agent's motion in the moving frame correspond to
	\begin{align}
	\psi_b = \phi_i + \theta(t_i) = \atantwo(V_o\sin\theta_o - V\sin\psi_d, V_o\cos\theta_o - V\cos\psi_d),
	\end{align}
	achieves the required continuity between $u_d(t)$ at $r(t) = r_i$ and $u(t)$ at $r(t) \geq r_i$.
	
	Analyzing further the connection between the inertial and polar equations of motion, presented in \eqref{eqn::Inertial2Polar}, the following relation is obtained for the inertial control input:
	\begin{align}
	u(t) = \dot{\psi} =& (u_d(t)+\dot{\theta}) \frac{V_b^2-V_b V_o \cos(\phi+\theta-\theta_o)}{V^2} - \dot{V}_b \frac{V_o \sin(\phi+\theta-\theta_o)}{V^2},
	\label{eqn::InertialControlSingleDynamicObstacle}
	\end{align}
	where
	\begin{align}
	\dot{V}_b = \left(1 + \frac{V_o^2}{V V_b} \sin(\phi+\theta-\theta_o)\sin(\psi-\theta_o)\right)^{-1} \left( \big(u_d(t)+\dot{\theta}\big)\frac{V_o}{V}\big(V_b-V_o\cos(\phi+\theta-\theta_o)\big)\sin(\psi-\theta_o) \right).
	\label{eqn::vbdot}
	\end{align}
	The evolution of the agent's speed in the moving frame is singular whenever the expression $1 + \frac{V_o^2}{V V_b} \sin(\phi+\theta-\theta_o)\sin(\psi-\theta_o)$ tends to $0$. This expression can be equivalently written as:
	\begin{align}
	\bigg(\frac{V}{V_o}\bigg)\bigg(\frac{V_b}{V_o}\bigg) = -\sin{(\phi+\theta-\theta_o)}\sin{(\theta_o-\psi)}.
	\label{eqn::sing}
	\end{align}
	Moreover, applying the sine law in the triangle formed by the vector addition of $\bm{V_b}$ and $\bm{V_o}$, the following relation is obtained:
	\begin{align}
	\frac{V_b}{\sin{(\theta_o-\psi)}} = \frac{V}{\sin{(\phi+\theta-\theta_o)}},
	\end{align}
	which, in view of \eqref{eqn::sing}, results in a parametric condition that leads to the singularity:
	\begin{align}
	\bigg(\frac{V}{V_o}\bigg)^2 = \sin^2{(\phi+\theta-\theta_o)}.
	\label{eqn::sing_cond}
	\end{align}
	Condition \eqref{eqn::sing_cond} will not be true as long as $V>V_o$. Thus, if the agent's speed is higher than the obstacle's speed and if the initial conditions of system \eqref{eqn::SysInertial} are such that \eqref{eqn::CAconstraintmoving} is satisfied, then the application of \eqref{eqn::InertialControlSingleDynamicObstacle} will generate paths that follow perfectly the streamlines of the dynamic obstacle CAVF, as illustrated in the sample simulation trials from Figure \ref{fig::CAVFmoving_stream}.
	\begin{figure}
		\centering
		\begin{subfigure}{0.3\textwidth}
			\centering
			\includegraphics[width=2in]{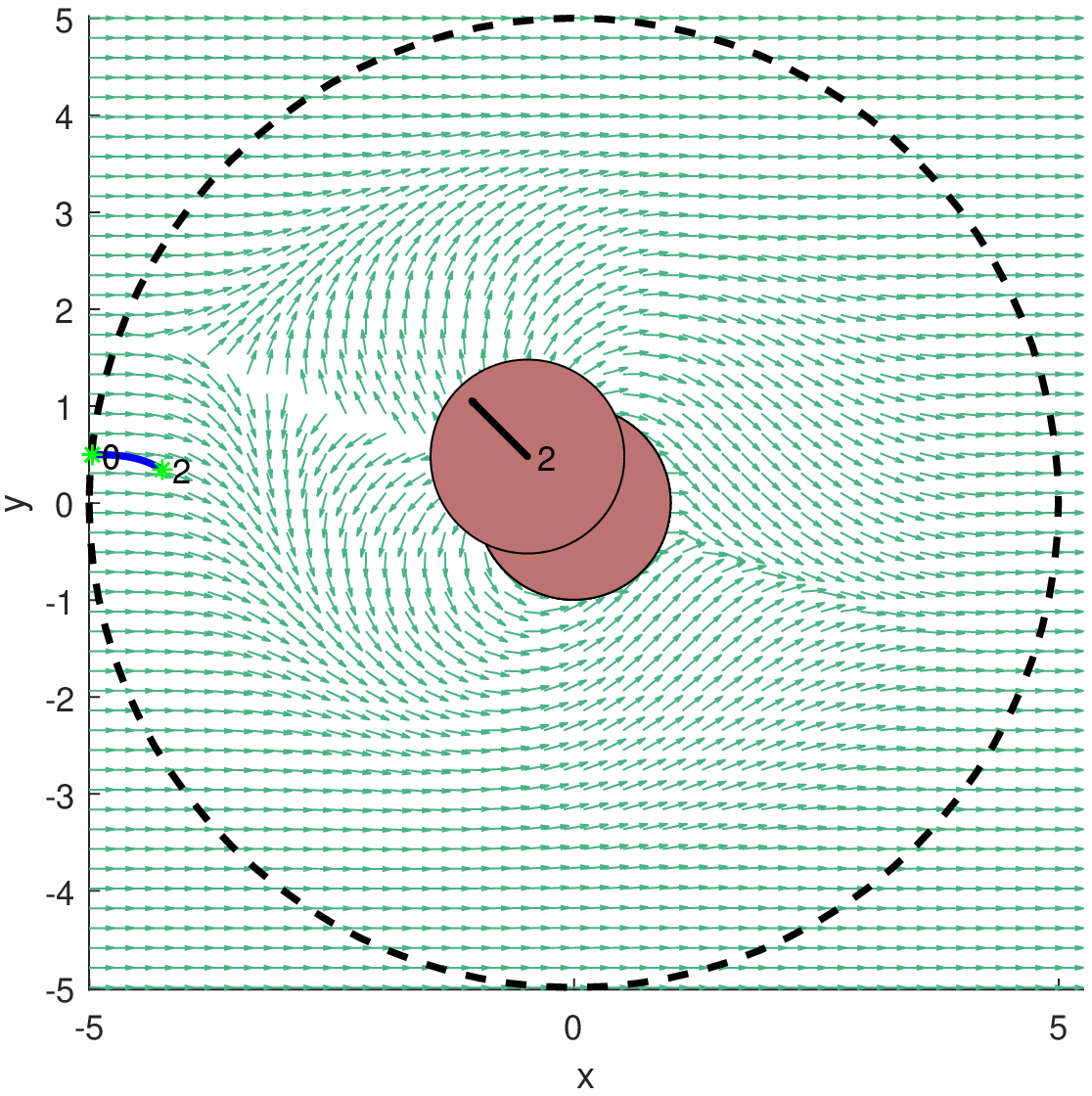}
			\caption{$t \in [0,2]$ sec}
		\end{subfigure}\hspace{1 mm}
		\begin{subfigure}{0.3\textwidth}
			\centering
			\includegraphics[width=2in]{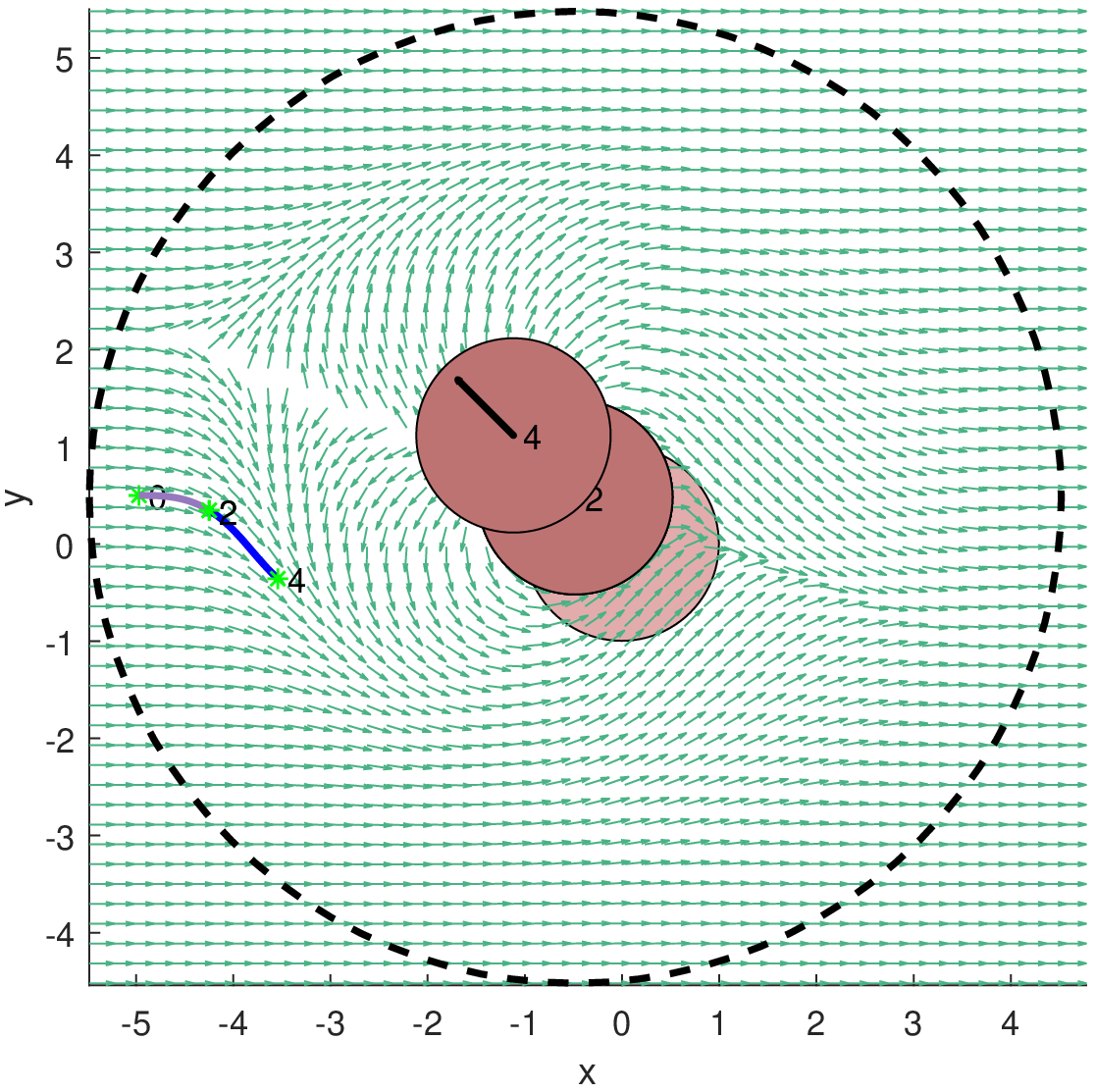}
			\caption{$t \in [2,4]$ sec}
		\end{subfigure}\hspace{1 mm}
		\begin{subfigure}{0.3\textwidth}
			\centering
			\includegraphics[width=2in]{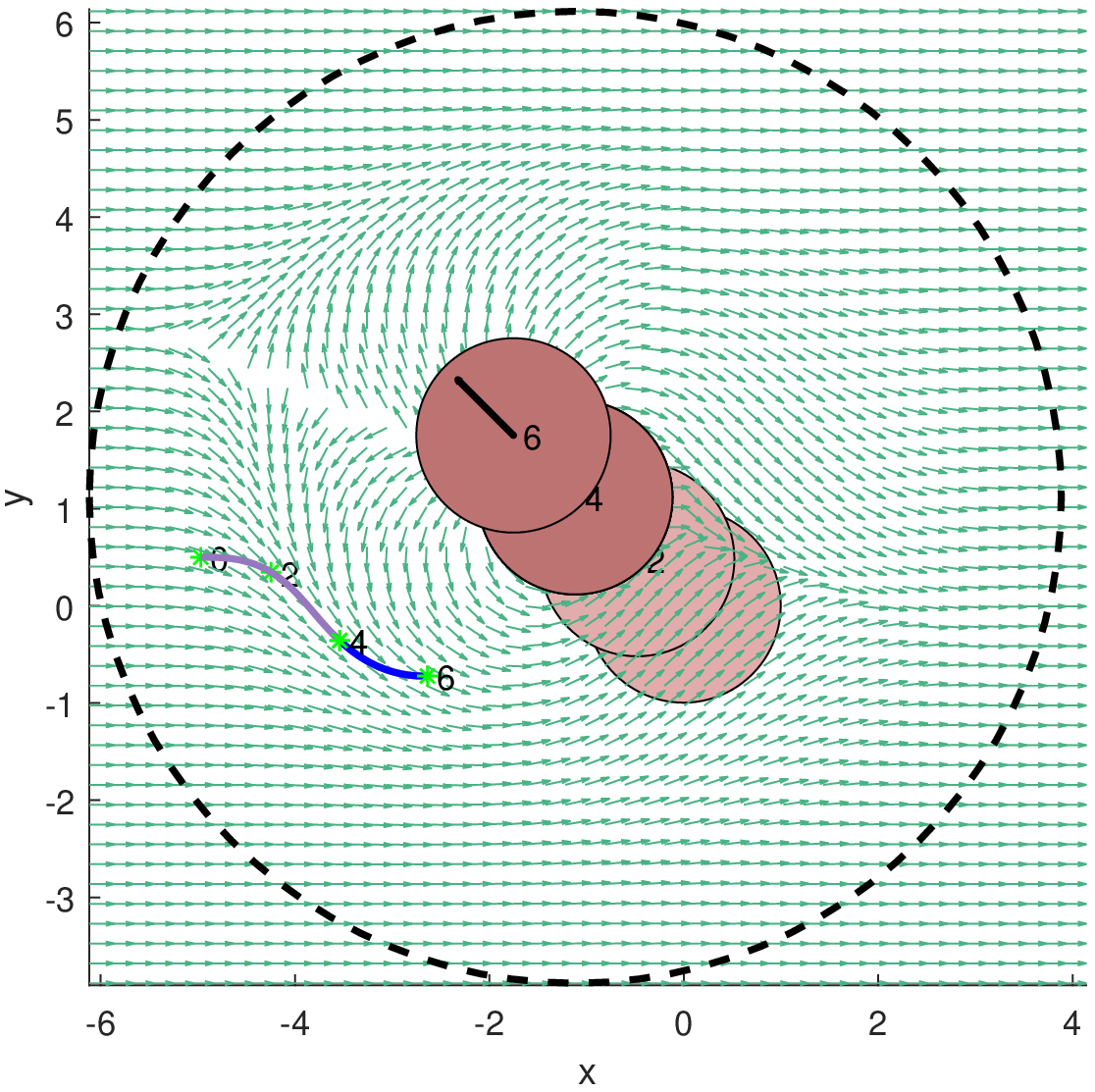}
			\caption{$t \in [4,6]$ sec}
		\end{subfigure}
		\begin{subfigure}{0.3\textwidth}
			\centering
			\includegraphics[width=2in]{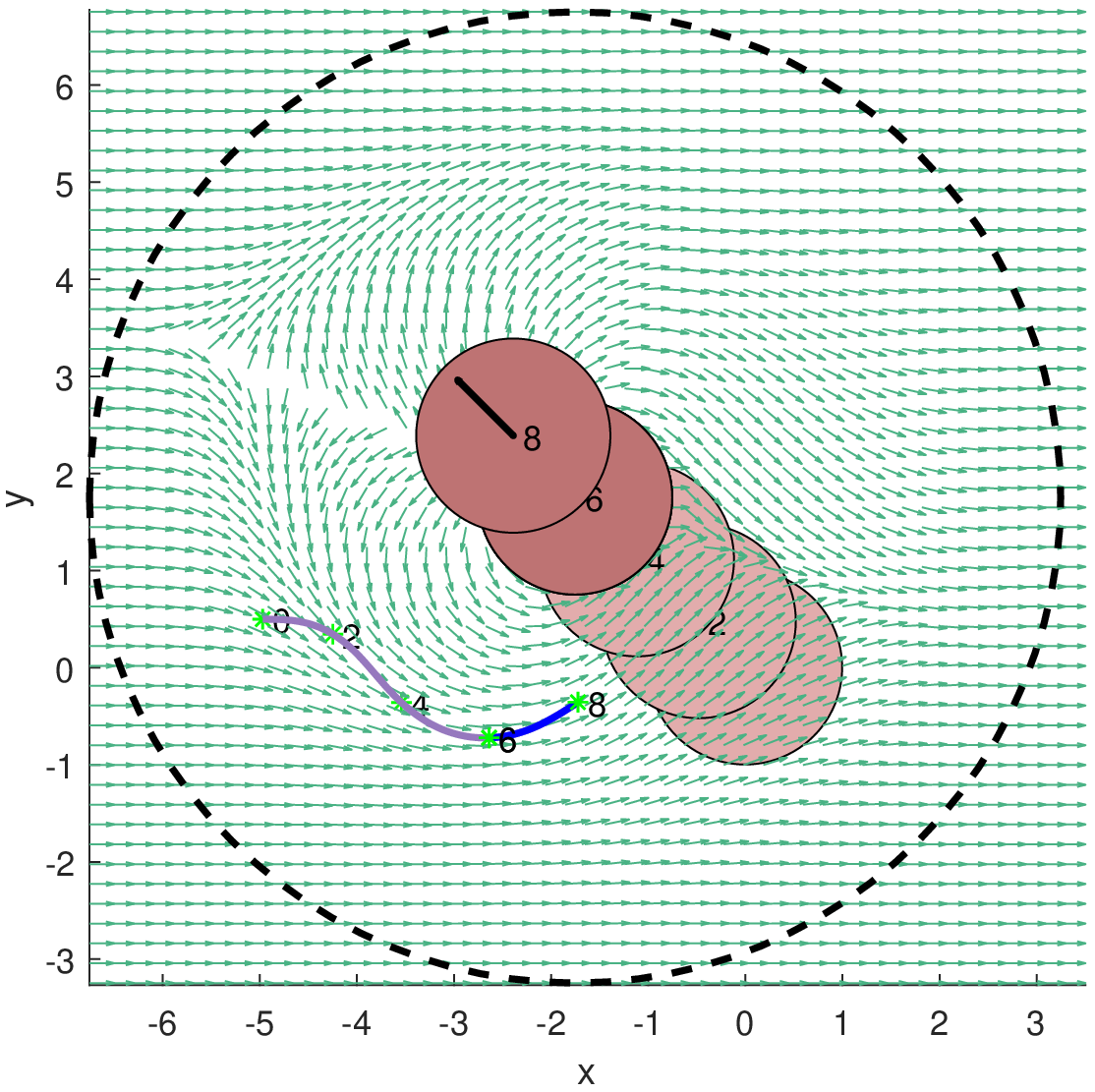}
			\caption{$t \in [6,8]$ sec}
		\end{subfigure}\hspace{1 mm}
		\begin{subfigure}{0.3\textwidth}
			\centering
			\includegraphics[width=2in]{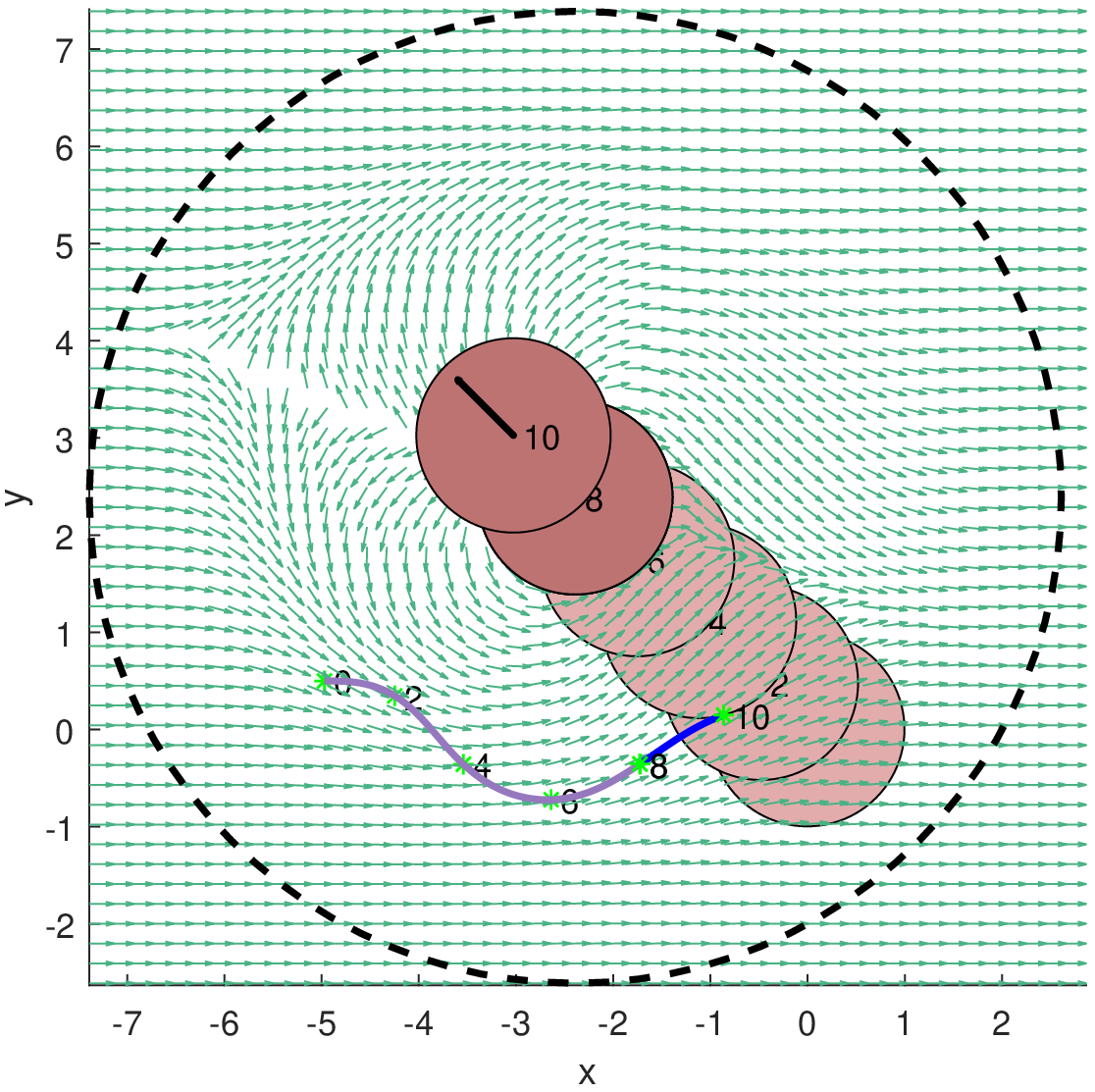}
			\caption{$t \in [8,10]$ sec}
		\end{subfigure}\hspace{1 mm}
		\begin{subfigure}{0.3\textwidth}
			\centering
			\includegraphics[width=2in]{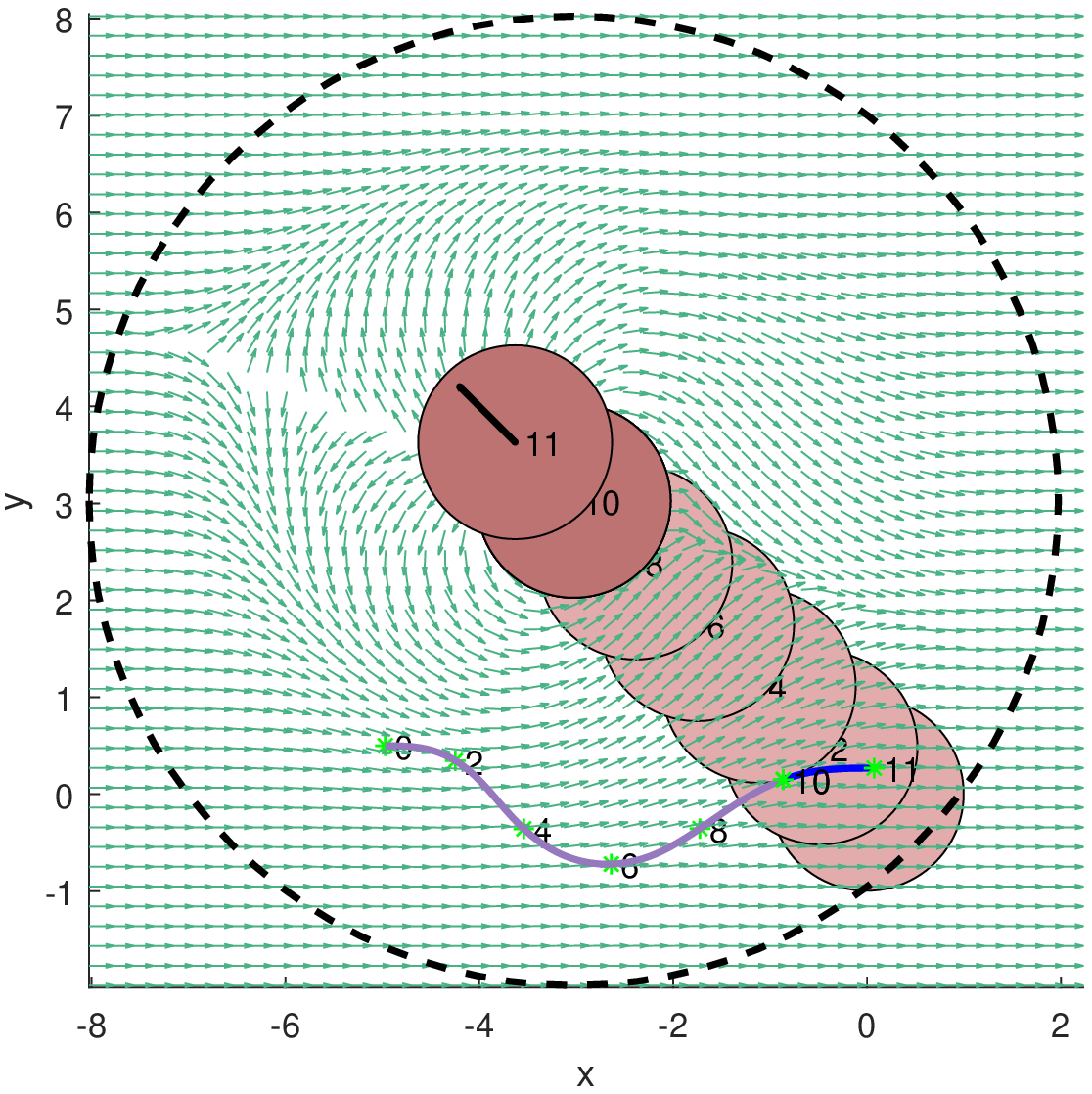}
			\caption{$t \in [10,11]$ sec}
		\end{subfigure}
		\caption{Trajectories of system \eqref{eqn::SysInertial} (blue) generated using input $u(t)$ defined in  \eqref{eqn::InertialControlSingleDynamicObstacle}, for a CAVF (green vector field) with the following parameters $a = 1, ~r_i = 3, ~\psi_d = 0,~ V_o = 0.9, ~\theta_o = 2.35$.}
		\label{fig::CAVFmoving_stream}	
	\end{figure}
	
	\subsection{Collision Avoidance Controller for Multiple Obstacles}
	The previous controllers, given in \eqref{eqn::InertialControlSingleStaticObst} and \eqref{eqn::InertialControlSingleDynamicObstacle}, can be used in the mixing process presented in Algorithm \ref{algo::1} to determine a controller that will try to follow the streamlines of the mixed CAVF, as long as the correct initial conditions are used. 
	\begin{proposition}
		Suppose there are $n$ obstacles identified by the index set $\mathcal{J}(t) = \{1,2,\dots,n(t)\}$ with overlapping radii of influence. Then, an agent trying to follow the streamlines of the corresponding mixed CAVF provided by Algorithm \ref{algo::1}, may use the following control input:
		\begin{align}
		u_m(t) = \sum_{j \in \mathcal{J}(t)} W^j(t) u^j(t),
		\label{eqn::InertialControlMixedObst}
		\end{align} 
		where $u^j(t)$ is the control input for the $j$-th obstacle, which may be of form \eqref{eqn::InertialControlSingleStaticObst} or \eqref{eqn::InertialControlSingleDynamicObstacle}, and $W^j(t)$ is a weight that depends on the evolution of each individual CAVF and on their corresponding mixing weights from Algorithm \ref{algo::1}.
		\label{prop::mult_control}
	\end{proposition}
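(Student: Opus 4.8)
The plan is to obtain $u_m=\dot\psi$ by direct differentiation, using the observation that an agent tracking the mixed field carries a heading equal to the field's direction: $\psi(t)=\atantwo\big(h_y,h_x\big)$, where $h=\sum_{j}w^j h^j=[h_x,h_y]^T$ is the output of Algorithm \ref{algo::1}. Differentiating the four-quadrant arctangent gives the steering rate
\begin{align}
u_m(t)=\dot\psi(t)=\frac{h_x\dot h_y-h_y\dot h_x}{\norm{h}^2}=\frac{\langle \bm{R}(\pi/2)h,\dot h\rangle}{\norm{h}^2},
\label{eqn::umderiv}
\end{align}
which is valid provided $\norm{h}\neq 0$; the minimum-separation hypothesis $\delta$ of Definition \ref{def::MixCAVF} keeps the obstacles, and hence the regions where opposing fields could cancel, apart, so this non-stagnation condition can be taken to hold throughout the combined region of influence.

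Next I would differentiate $h=\sum_j w^j h^j$ by the product and chain rules, $\dot h=\sum_j \dot w^j h^j+\sum_j w^j \dot h^j$, and exploit the fact that every individual CAVF has constant magnitude $V$: for a static obstacle $\norm{h^j}^2=\dot r^2+(r\dot\theta)^2=V^2$ by \eqref{eqn::CAVFstatic}, while for a moving obstacle the relative speed $V_b$ was chosen precisely so that $\norm{{}^i\bm{V_b}+{}^i\bm{V_o}}=V$. Constant magnitude forces each $\dot h^j$ to be orthogonal to $h^j$, so $\dot h^j=\dot\Psi^j\,\bm{R}(\pi/2)h^j$, where $\Psi^j:=\angle h^j$ is the direction of the $j$-th field and $\dot\Psi^j$ is its rate of change evaluated along the actual (mixed) trajectory.

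Substituting and using the rotation-invariance of the inner product, $\langle \bm{R}(\pi/2)h,\bm{R}(\pi/2)h^j\rangle=\langle h,h^j\rangle$, the velocity-tracking part of \eqref{eqn::umderiv} collapses into a weighted combination of the individual field-direction rates:
\begin{align}
u_m(t)=\sum_{j\in\mathcal{J}(t)}\frac{w^j\,\langle h,h^j\rangle}{\norm{h}^2}\,\dot\Psi^j+\frac{1}{\norm{h}^2}\sum_{j\in\mathcal{J}(t)}\dot w^j\,\langle \bm{R}(\pi/2)h,h^j\rangle.
\label{eqn::umfinal}
\end{align}
Each rate $\dot\Psi^j$ is exactly the single-obstacle steering command $u^j$ of \eqref{eqn::InertialControlSingleStaticObst} or \eqref{eqn::InertialControlSingleDynamicObstacle} whenever the velocity is aligned with $h^j$ — since those commands were derived by differentiating the alignment constraint \eqref{eqn::CAconstraint} (respectively \eqref{eqn::CAconstraintmoving}) — and more generally is the directional derivative of $\Psi^j$ along the motion. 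Writing $u^j$ for this rate, the coefficient of $u^j$ is the scalar projection $W^j(t)=w^j\langle h,h^j\rangle/\norm{h}^2$, a weight that depends both on the mixing weights $w^j$ supplied by Algorithm \ref{algo::1} and on the instantaneous geometry of each field, exactly as the statement asserts.

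The step I expect to be the main obstacle is disposing of the residual second sum in \eqref{eqn::umfinal} and the alignment caveat on $\dot\Psi^j$. The mixing weights are time-varying and only piecewise smooth — they switch at the $\epsilon_m$ threshold and across the $n$ switching lines — so the $\dot w^j$ terms do not vanish identically away from the obstacle surfaces, and folding them, together with the velocity-misalignment correction in $\dot\Psi^j$, into a single scalar $W^j$ is where the bookkeeping concentrates. I would compute the $\dot w^j$ by differentiating the piecewise expressions in lines 8--24 of Algorithm \ref{algo::1} along the trajectory, and then invoke the fact established in the proof that Algorithm \ref{algo::1} generates a mixed CAVF, namely that on the surface of obstacle $k$ the weights collapse to $w^k=1$ and $w^{j}=0$ for $j\neq k$, so that the residual vanishes precisely where collision avoidance must be enforced. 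This residual is the content of the phrase that $W^j$ "depends on the evolution of each individual CAVF," and it is also why the proposition claims only that $u_m$ \emph{may} be used as a feedforward command, with the separately introduced tracking controller absorbing whatever mismatch remains so that the streamlines are followed.
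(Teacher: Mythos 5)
Your proposal follows essentially the same route as the paper's Appendix-A proof: differentiate the heading $\atantwo(h_y,h_x)$ of the weighted sum, use the constant magnitude $V$ of each individual CAVF to write $\dot h^j$ as a rotation of $h^j$, and read off $W^j(t) = w^j\langle h, h^j\rangle/\norm{h}^2$, which is algebraically identical to the paper's expression for $W^j$ in terms of $\cos(\psi_i-\psi_j)$ once one expands $\langle h,h^j\rangle$ and $\norm{h}^2$. The one point where you go beyond the published argument is in retaining the residual $\norm{h}^{-2}\sum_j \dot w^j\langle \bm{R}(\pi/2)h, h^j\rangle$: the paper's proof silently treats the mixing weights as constants when forming $\ddot x_m,\ddot y_m$, so your explicit accounting of that term (and your remark that it vanishes on the obstacle surfaces and is otherwise absorbed by the tracking controller) is, if anything, more careful than the paper's own proof.
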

	\begin{proof}
		See Appendix A.
	\end{proof}

	Moreover, it was noted in Section \ref{sec::CAVF} that the mixed CAVF presents discontinuities resulting from crossing different regions of influence or the weight threshold imposed by Algorithm \ref{algo::1}. Therefore, applying \eqref{eqn::InertialControlMixedObst} to system \eqref{eqn::SysInertial} will not necessarily result in trajectories that follow the streamlines of the mixed CAVF, due to the mentioned discontinuities. As such, an UAV moving around multiple obstacles will require a tracking controller that can achieve convergence to the mixed CAVF in a short amount of time. Specifically, depending on the minimum separation between obstacles, the tracking controller has to guarantee convergence to the vector field within enough time to clear the separation, free from any collision. 
	
	Consider the following tracking controller to be used in the case of an UAV moving around multiple obstacles with overlapping regions of influence:
	\begin{align}
	u_t(t) = -K(\psi(t)-\psi_{\text{ca}}(t)) + u_m(t),
	\label{eqn::TrackingControl}
	\end{align}
	where $K$ is a proportional gain that will be used to enforce tracking convergence, $\psi_{\text{ca}}(t)$ is the mixed CAVF heading and $u_m(t)$ is the controller defined in \eqref{eqn::InertialControlMixedObst}. 
	
	\begin{proposition}
		For a given error tolerance $e_\psi$ in UAV heading convergence to the mixed CAVF, where $0 < |e_\psi| \ll \pi/2$, and a minimum separation distance between obstacles $\delta > 0$, the following proportional gain:
		\begin{align}
		K = \frac{2V(\log{\pi}-\log{e_\psi})}{\delta}
		\label{eqn::TrackingGain}
		\end{align}
		guarantees that controller $u_t(t)$ defined in \eqref{eqn::TrackingControl} results in CAVF tracking.
		\label{prop::gain}
	\end{proposition}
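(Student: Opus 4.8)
The plan is to recast the statement as a scalar convergence requirement on the heading-tracking error and then size $K$ so that this error is driven below $e_\psi$ within the time the UAV needs to clear the inter-obstacle gap. First I would define the tracking error $e(t) := \psi(t) - \psi_{\text{ca}}(t)$ and differentiate it along the closed loop. Using $\dot\psi(t) = u_t(t)$ together with \eqref{eqn::TrackingControl}, I obtain
\begin{align*}
\dot e(t) = -K e(t) + \big(u_m(t) - \dot\psi_{\text{ca}}(t)\big).
\end{align*}
The central structural fact I would invoke is Proposition \ref{prop::mult_control}: the mixed feedforward $u_m(t)$ is precisely the total time-derivative of the mixed-CAVF heading experienced along the streamline, so that $u_m(t) = \dot\psi_{\text{ca}}(t)$ on the CAVF. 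Substituting this cancels the bracketed term and reduces the error dynamics to the linear autonomous equation $\dot e = -Ke$, whose solution is $e(t) = e(t_0)\,e^{-K(t-t_0)}$.

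Next I would bound the worst-case initial error and the available convergence time. Because $\psi$ and $\psi_{\text{ca}}$ are angles taken modulo $2\pi$, the largest possible heading mismatch at any reinitialization — in particular at the instant the dominant obstacle changes in Algorithm \ref{algo::1} and $\psi_{\text{ca}}$ jumps — is $|e(t_0)| \le \pi$, which is the source of the $\log\pi$ term. For the time budget I would use the minimum-separation hypothesis $\|\bm{p_o}^j - \bm{p_o}^k\| > r_o^j + r_o^k + \delta$: the switch of the maximal mixing weight occurs where the agent is essentially equidistant from the two competing obstacles, i.e.\ near the midpoint of the gap, so after the jump the UAV still has a collision-free corridor of length at least $\delta/2$ before it could reach the next boundary. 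At constant speed $V$ this corresponds to a worst-case travel time $\Delta t = \delta/(2V)$, which is what produces the factor of $2$ in \eqref{eqn::TrackingGain}.

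Combining these, the tolerance requirement $|e(t_0+\Delta t)| \le e_\psi$ becomes $\pi\,e^{-K\Delta t} \le e_\psi$, i.e.\ $K\Delta t \ge \log\pi - \log e_\psi$; solving with $\Delta t = \delta/(2V)$ yields exactly the gain \eqref{eqn::TrackingGain}, and since $e^{-K(t-t_0)}$ is monotone decreasing, any $K$ at least this large keeps $|e|$ below $e_\psi$ over the remainder of the corridor, which is the meaning of \emph{CAVF tracking} here. The part I expect to be the main obstacle is rigorously justifying the cancellation $u_m = \dot\psi_{\text{ca}}$ away from perfect tracking and across the discontinuities: off the streamline the feedforward is evaluated at the true state while the desired rate is that of the field, so a residual disturbance $u_m - \dot\psi_{\text{ca}}$ persists, and the clean exponential bound must be replaced by a piecewise argument — resetting $e$ to at most $\pi$ at each jump of $\psi_{\text{ca}}$ and invoking input-to-state stability of $\dot e = -Ke + d(t)$ with $d$ uniformly bounded — to confirm that the chosen $K$ still delivers the $e_\psi$ tolerance within each gap.
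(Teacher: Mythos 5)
Your proof follows essentially the same route as the paper's: identical error dynamics $\dot e(t) = -Ke(t)$ after cancelling $u_m(t) = \dot\psi_{\text{ca}}(t)$ via Proposition \ref{prop::mult_control}, the same worst-case initial error bound of $\pi$, and the same time budget $\delta/(2V)$, which together yield exactly the gain \eqref{eqn::TrackingGain}. Your closing caveat---that the cancellation $u_m = \dot\psi_{\text{ca}}$ is only exact on the streamline and that a piecewise or input-to-state-stability argument is needed across the weight-switching discontinuities---identifies a genuine gap that the paper itself glosses over by asserting the cancellation is ``straightforward,'' so if anything your treatment is the more careful of the two.
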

	\begin{proof}
		See Appendix \ref{app::proof}.
	\end{proof}	
	
	Proposition \ref{prop::gain} relates the gain of the tracking controller with a desired heading error tolerance and a desired distance within which convergence is achieved. Therefore, picking a gain $K$ that satisfies \eqref{eqn::GainFormula} will guarantee tracking within the given specifications. 
	
	An illustrative example is shown in Figure \ref{fig::convergence}, in which two obstacles with overlapping radii of influence are considered and the proposed tracking controller is able to follow the direction imposed by the mixed CAVF obtained with Algorithm \ref{algo::1}.
	\begin{figure}
		\centering
		\includegraphics[width=3.25in]{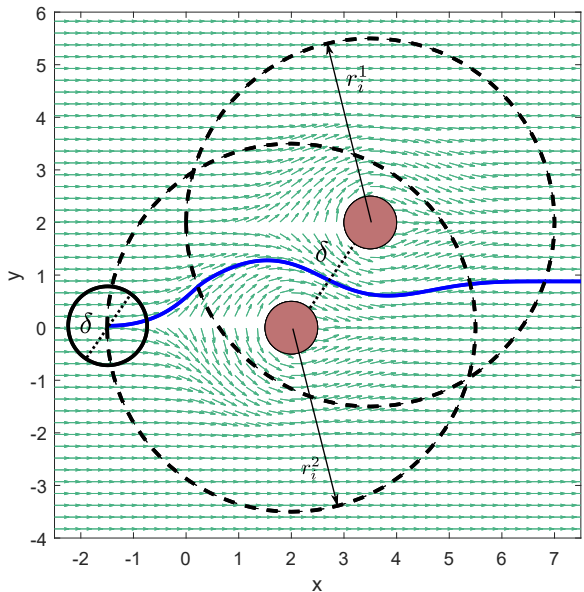}
		\caption{Application of the tracking controller \ref{eqn::TrackingControl} to system \ref{eqn::SysInertial}, where the gain $K$ depends on the separation distance, $\delta$, between two static obstacles with overlapping radii of influence, $r_i^1$ and $r_i^2$.}
		\label{fig::convergence}
	\end{figure}

	Finally, it has been observed in practice that other weights may be used for the mixing controller \ref{prop::mult_control} and collision avoidance is still preserved regardless of the type of obstacles that are encountered. For example, one may use directly the weights in Algorithm \ref{algo::1}, $w^j$, to compose a simpler controller with similar performance to the one presented in \ref{prop::mult_control}. 

	\section{Simulation Results}
	\label{sec::sims}
	In this section, three different simulation scenarios with real world applications are used to analyze the performance of the proposed methodology for collision avoidance. In the first scenario, a simulation with an UAV moving through an environment with multiple scattered static obstacles is presented. This scenario can be representative, for example, for an UAV moving at a constant height through a forest in which every tree is modeled as a static cylinder. In the second scenario, a simulation of an UAV moving through an environment populated with multiple moving obstacles, similar to a busy airspace populated with other UAVs, is considered. Lastly, a simulation of an UAV navigating through a complex environment, populated by both static and moving obstacles, is presented.
	
	\subsection{Collision Avoidance Scenario $1$: Navigation through a densely populated workspace with static obstacles}
	Consider an UAV modeled by system \eqref{eqn::SysInertial} moving through a workspace with multiple static obstacles. In a realistic setting, for example, this workspace can represent a small forest patch that contains $12$ trees, where each tree is modeled as a static circular obstacle if the UAV is moving at the same altitude. The UAV's control objective is to pass through the workspace while maintaining an initial Eastward direction, $\psi_d = 0$, and while avoiding any tree collision. To this end, the proposed methodology for creating a CAVF and tracking its streamlines is applied. 
	
	The results of the simulation are illustrated in Figure \ref{fig::forest} for the following UAV initial conditions and parameters, respectively: $r_s = 12~ \mathrm{m}$, $x_i = 0$, $y_i = 1.3$, $\psi_i = \psi_d = 0, ~V = 1~\mathrm{m}/\mathrm{s}$, $a= 1$ and $r_i^j = 2 ~\mathrm{m}, ~\forall j \in \mathcal{J}(t)$. The trajectory was obtained using the control input presented in \eqref{eqn::TrackingControl} with a gain $K = 25$, by tracking the mixed CAVF illustrated in Figure \ref{fig::forest_CAVF}. The gain was chosen such that it satisfies the condition for the minimum separation between obstacles, which in the given example is $\delta = 0.516$, and for the heading convergence error $e_\psi = 0.01$. The mixed CAVF was obtained by applying Algorithm \ref{algo::1}. As the agent moves through the forest patch, it has to adapt to its workspace and switch from avoiding one obstacle to another. These switches can be seen by the short discontinuities in the steering control, illustrated in Figure \ref{fig::forest_controls} by the short abrupt changes in the required steering rates, whenever the agent enters or leaves the region determined by an obstacle's radius of influence. The controller defined in \eqref{eqn::TrackingControl} is able to make the UAV's heading (illustrated by the black line in Figure \ref{fig::forest_controls}) follow closely the mixed CAVF heading (illustrated by the green dashed line in Figure \ref{fig::forest_controls}). Changing the gain $K$ would result in a better convergence but more demand from the UAV's actuators, by requesting more angular speed which would increase the spikes in $\dot{\psi}$. 
	\begin{figure}
		\centering
		\begin{minipage}{1\textwidth}
			\centering
			\includegraphics[width=6.5in]{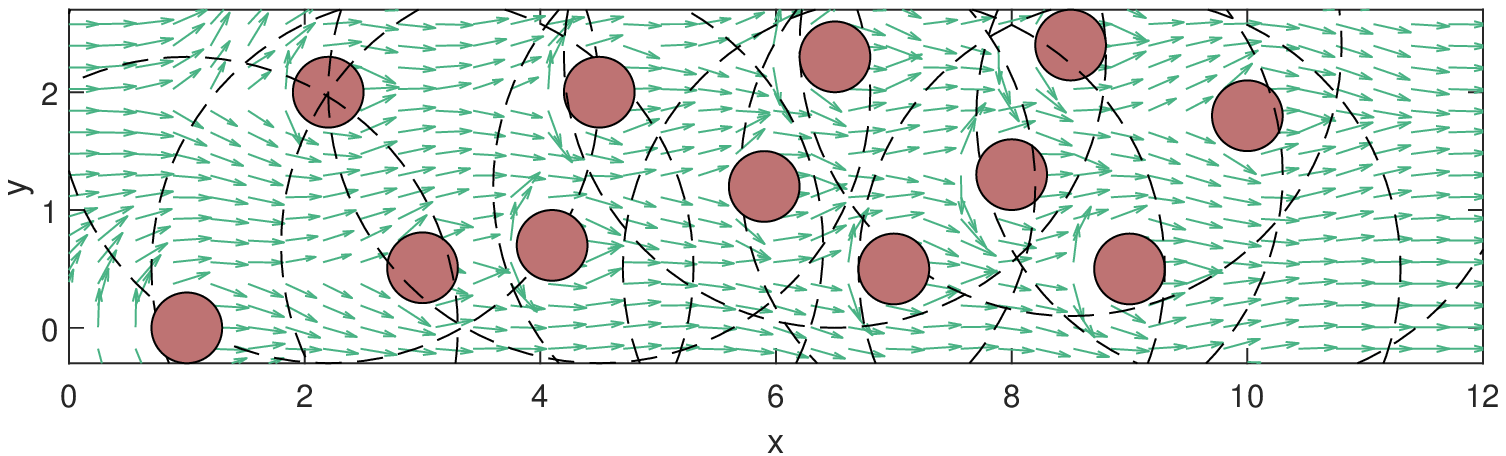}
			\caption{The resulting mixed CAVF for the given small forest patch and collision avoidance parameters.}
			\label{fig::forest_CAVF}
		\end{minipage}
		\begin{minipage}{1\textwidth}
			\centering
			\includegraphics[width=6.5in]{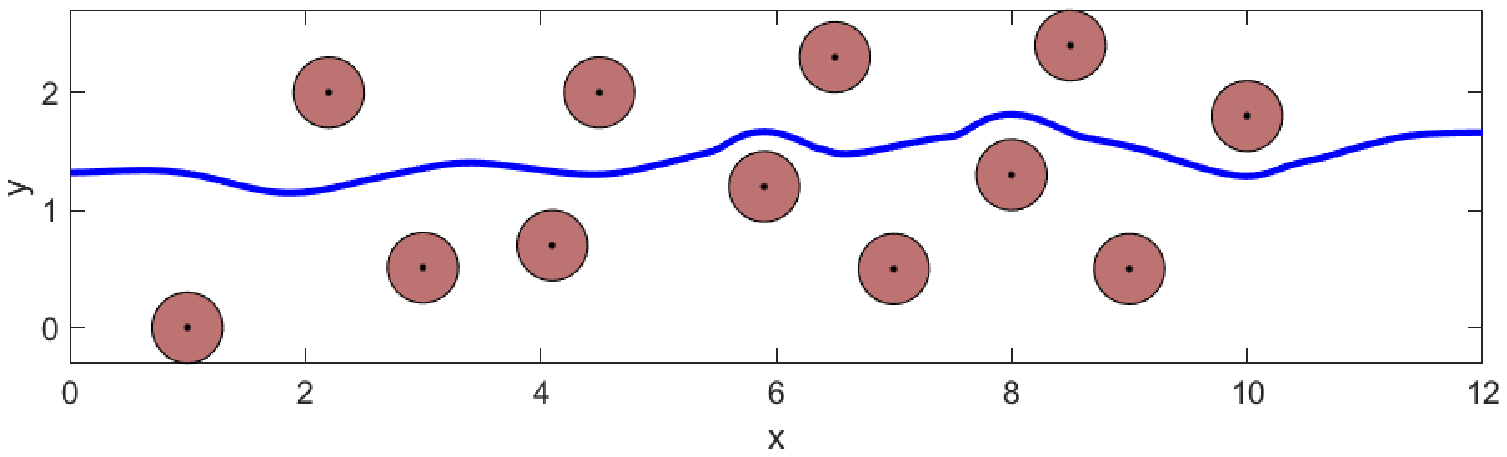}
			\caption{Collision avoidance for an UAV (blue line) moving through a forest patch with trees modeled as circular static obstacles of radius $r_o^j = 0.3~\mathrm{m}, ~\forall j \in \mathcal{J}(t)$.}
			\label{fig::forest}
		\end{minipage} 
	\end{figure}
	
	\begin{figure}[h!]
		\centering
		\includegraphics[width=6.5in]{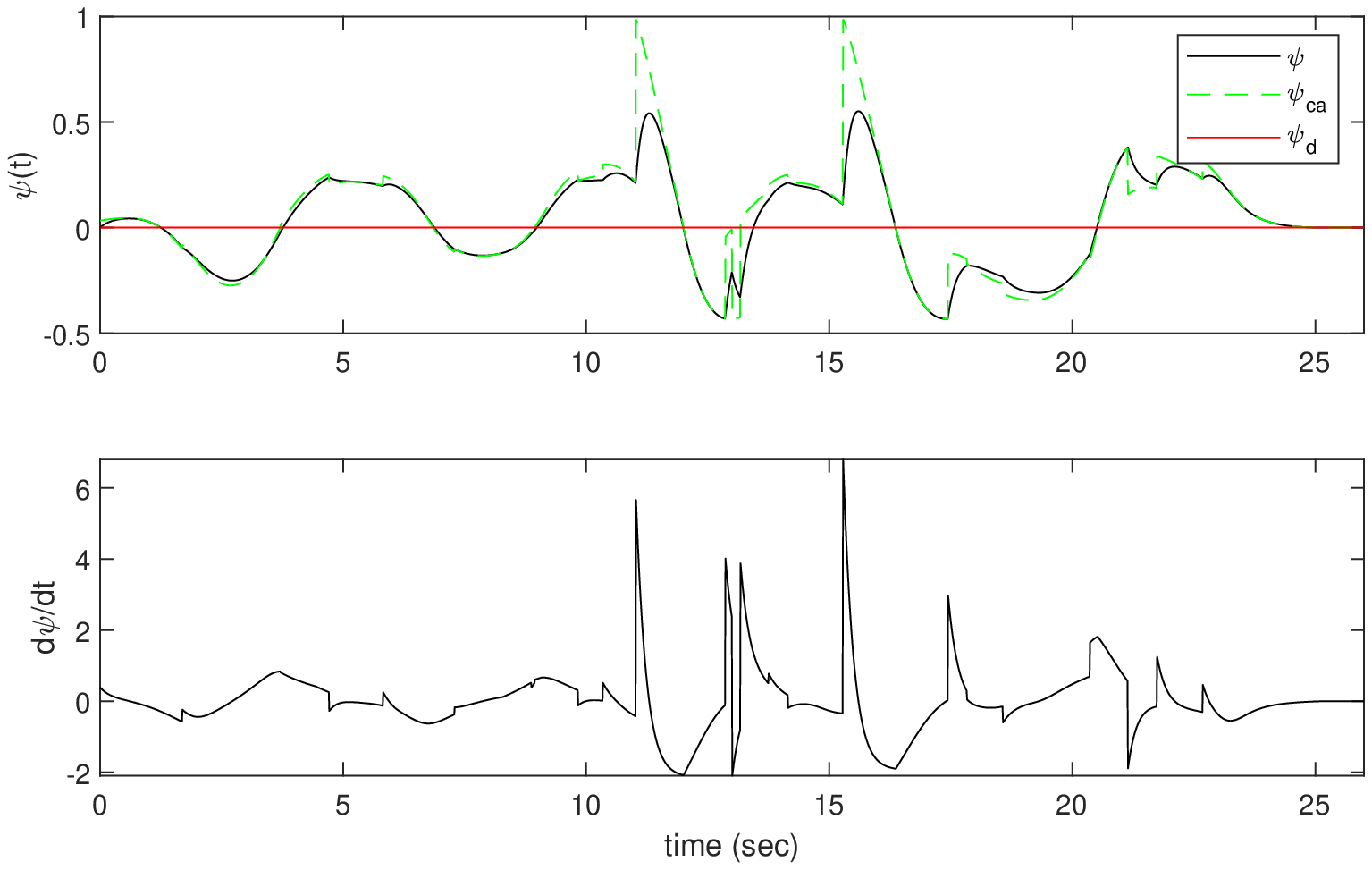}
		\caption{Steering rate controls with resulting UAV headings.}
		\label{fig::forest_controls}
	\end{figure}
	
	\subsection{Collision Avoidance Scenario $2$: Navigation through a densely populated workspace with moving obstacles}
	\label{sec::Sim2}
	Consider next the case when the UAV is moving through a workspace populated by multiple moving obstacles. In a realistic setting, this type of workspace could be representative for a high-traffic region of the airspace, where multiple UAVs are trying to perform cooperative or non-cooperative tasks, such as search and rescue, pay-load delivery, or surveillance. For example, suppose the workspace contains $5$ other moving UAV's, where each one is modeled as a moving circular obstacle of radius $r_o^j = 0.3~\mathrm{m}, ~\forall j\in\mathcal{J}(t)$. Each obstacle is moving with constant velocity, at different speeds and headings. The UAV's objective is to move through this workspace while avoiding collision with any incoming obstacles and while maintaining an Eastward general heading, $\psi_d = 0$. 
	
	The CAVF for this problem is generated by applying the methodology presented in section \ref{sec::CAVF} for multiple moving obstacles with overlapping radii of influence. Therefore, the controller from Section \ref{sec::Control} is used to track the resulting CAVF. The results of the simulation are illustated in Figure \ref{fig::Airspace} for the following UAV initial conditions and parameters, respectively: $x_i = -3.3, ~y_i = 0, ~\psi_i = \psi_d = 0, ~ V = 1~\mathrm{m}/\mathrm{s},~ a = 1$ and $r_i^j = 3, ~\forall j \in \mathcal{J}(t)$. The trajectory was obtained using the control input given in \eqref{eqn::TrackingControl} with a time-varying gain $K(t) = 11.5\delta(t)^{-1}$, where $\delta(t) = \min_{j\in\mathcal{J}(t)}{\norm{r^j(t)-r_o^j}}$ is the minimum distance to an obstacle at time $t$ and where $r^j(t) = \norm{\bm{p}-\bm{p_o}^j}$ is the distance between the agent's position and the obstacle $j$'s position. The heading convergence error is set to be $e_\psi = 0.01$. The simulation shows an agent performing multiple collision avoidance maneuvers around the moving obstacles. Looking at the steering rate controls in Figure \ref{fig::Airspace_controls}, it can be noted that the presented controller is able to track with accuracy the desired CAVF heading. The peaks in the profile of steering rate v/s time determine different control authority switches for collision avoidance, depending on the obstacle proximities. 
	\begin{figure}
		\centering
		\begin{subfigure}{0.3\textwidth}
			\centering
			\includegraphics[width=2in]{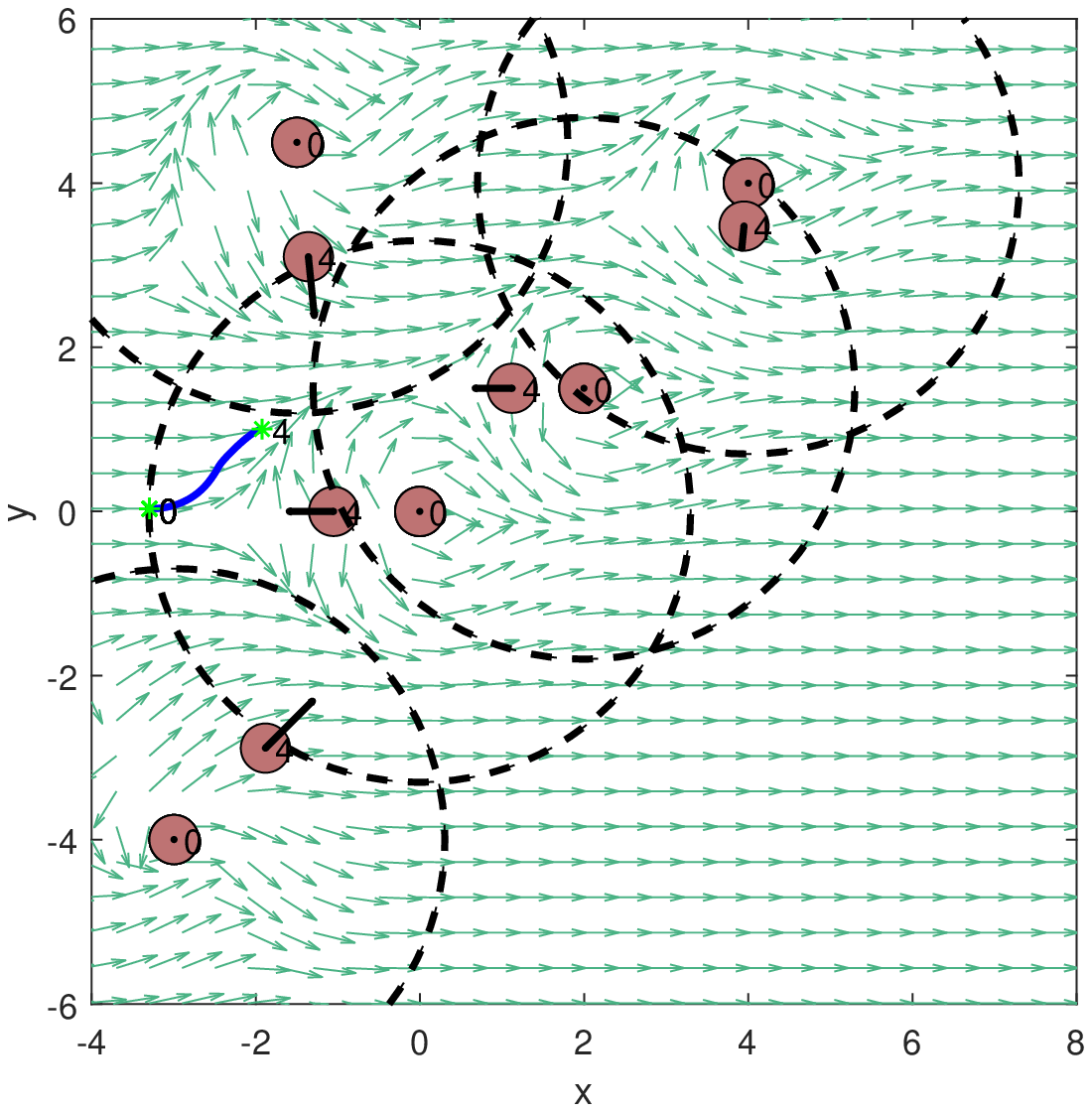}
			\caption{$t \in [0,4]$ sec}
		\end{subfigure}\hspace{1 mm}
		\begin{subfigure}{0.3\textwidth}
			\centering
			\includegraphics[width=2in]{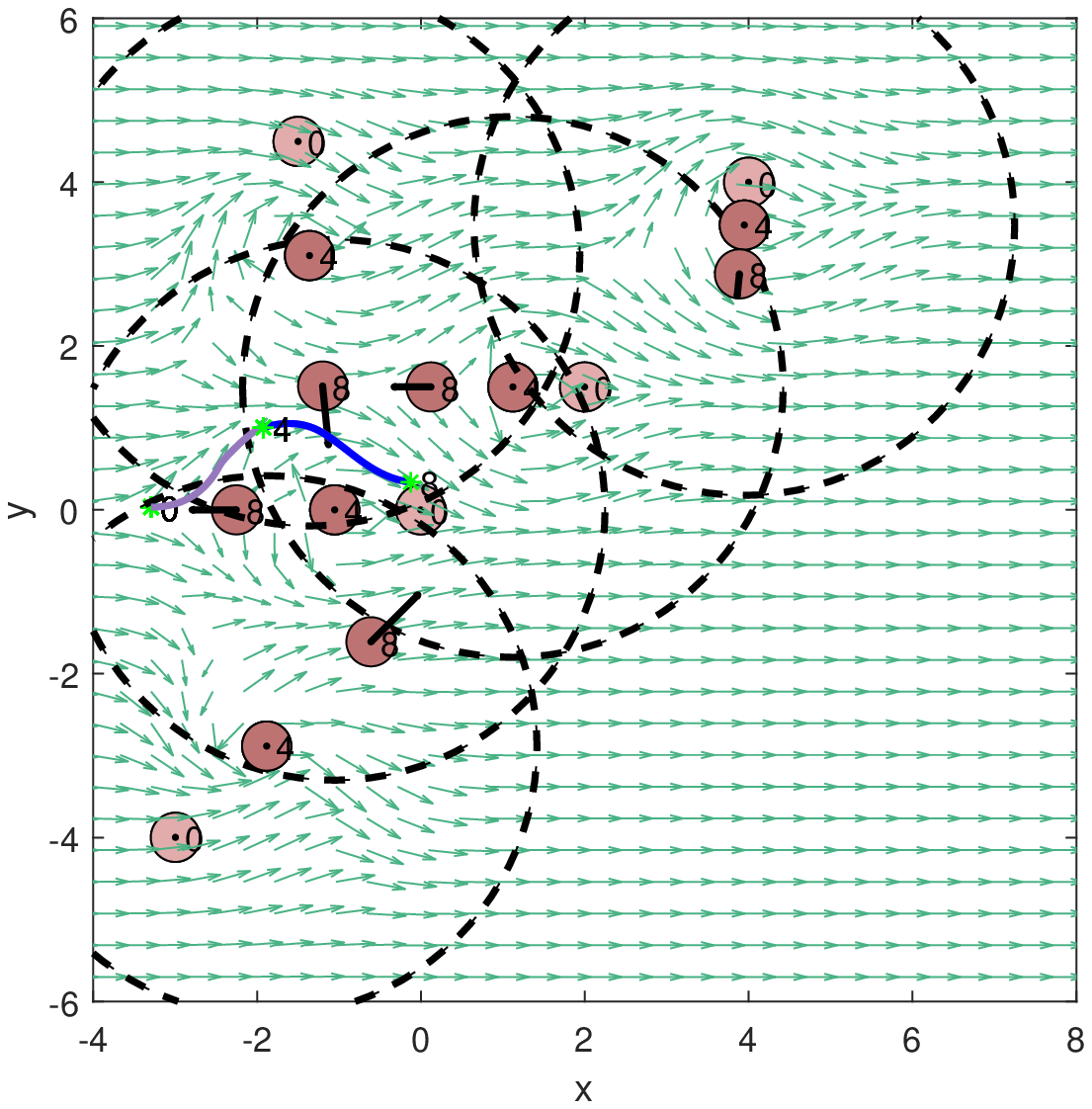}
			\caption{$t \in [4,8]$ sec}
		\end{subfigure}\hspace{1 mm}
		\begin{subfigure}{0.3\textwidth}
			\centering
			\includegraphics[width=2in]{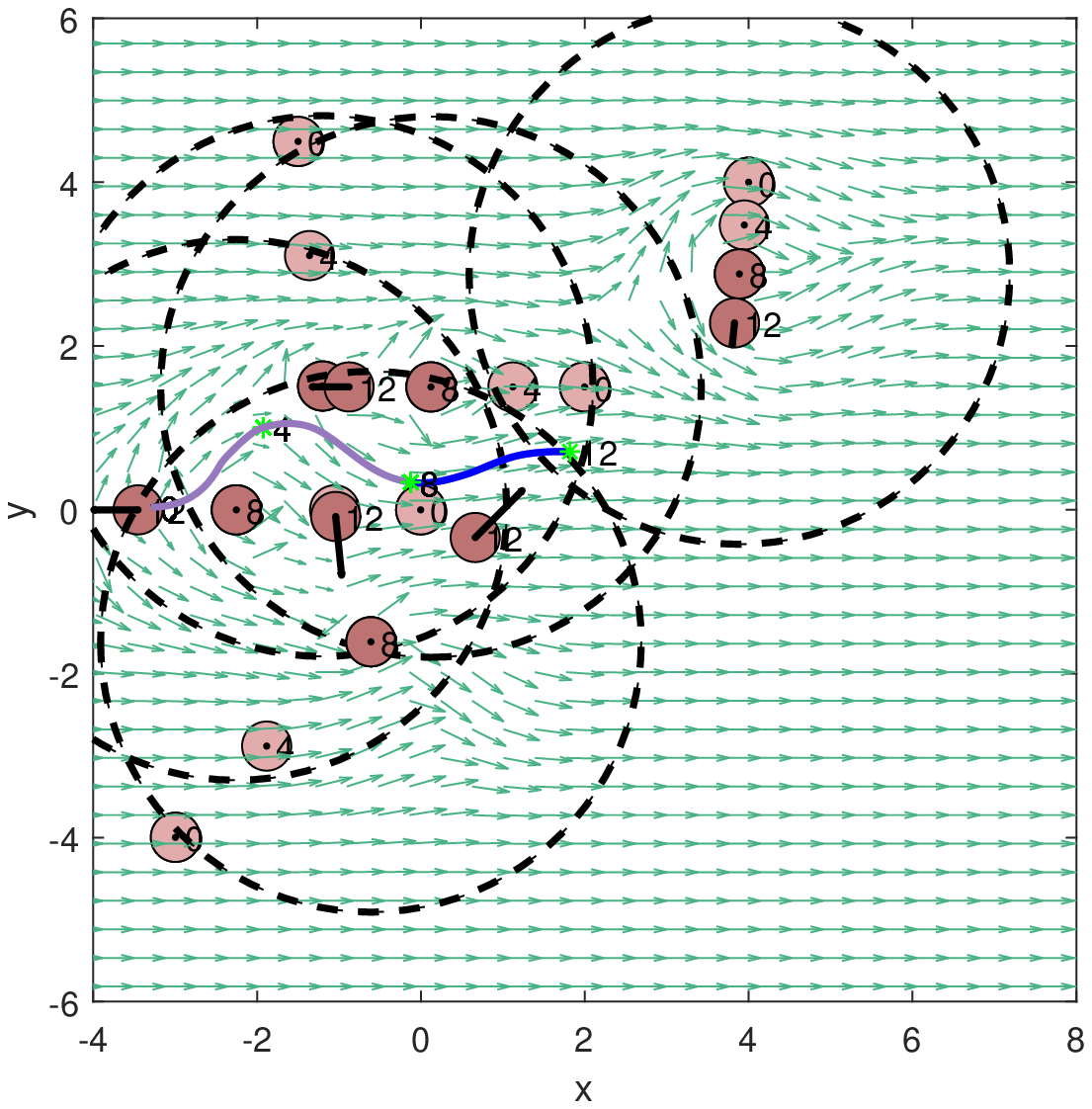}
			\caption{$t \in [8,12]$ sec}
		\end{subfigure}
		\begin{subfigure}{0.3\textwidth}
			\centering
			\includegraphics[width=2in]{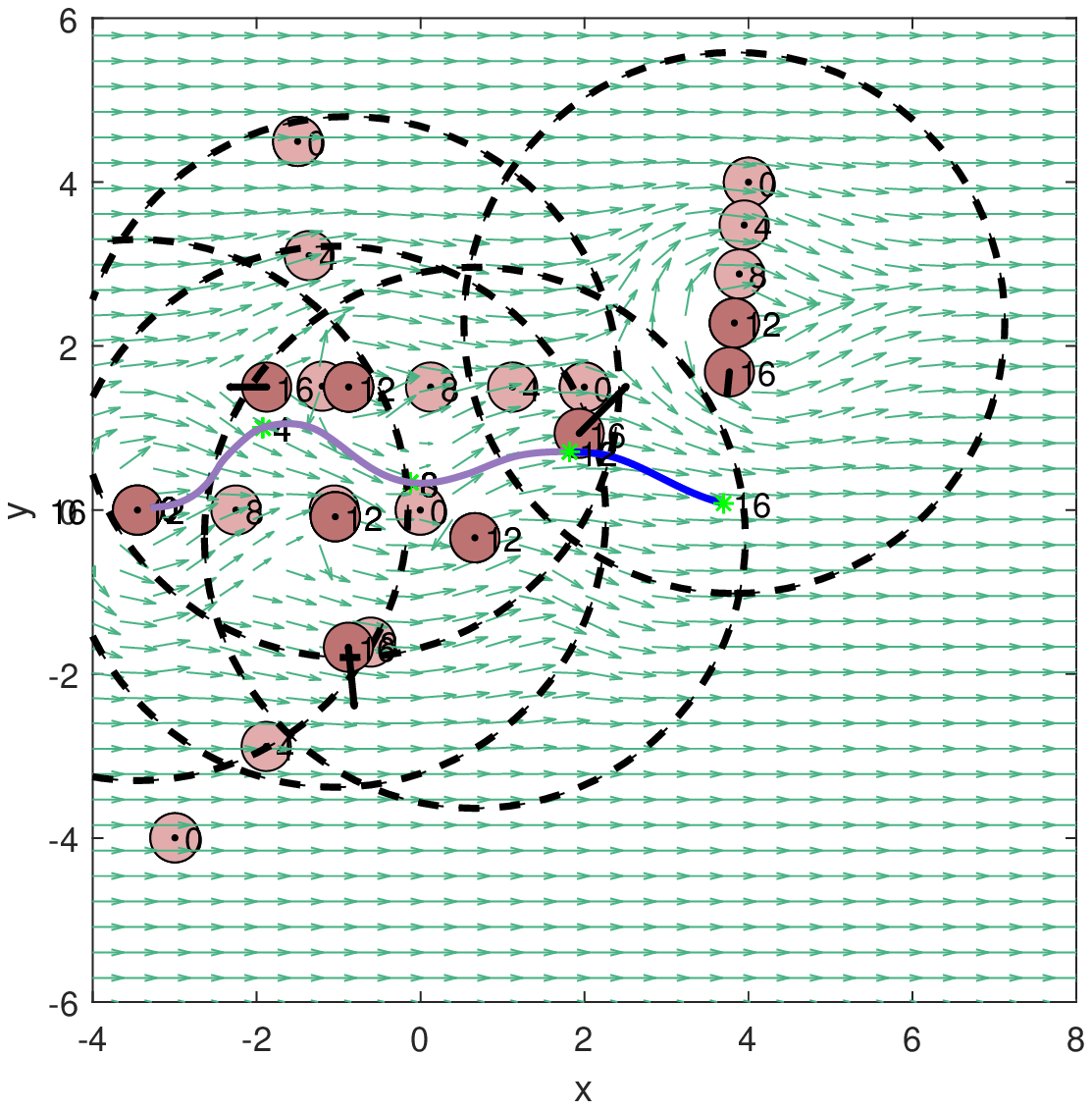}
			\caption{$t \in [12,16]$ sec}
		\end{subfigure}\hspace{1 mm}
		\begin{subfigure}{0.3\textwidth}
			\centering
			\includegraphics[width=2in]{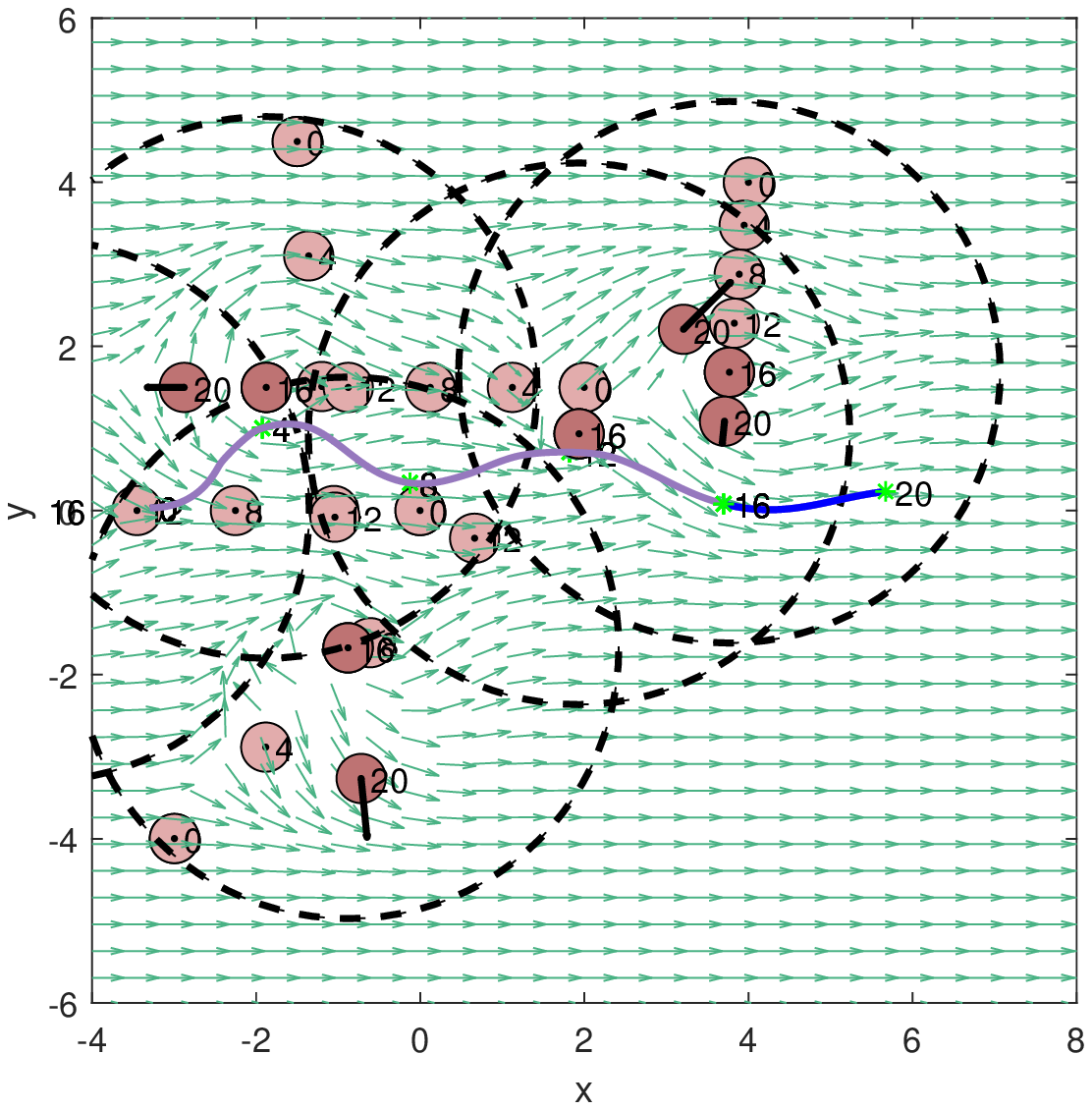}
			\caption{$t \in [16,20]$ sec}
		\end{subfigure}\hspace{1 mm}
		\begin{subfigure}{0.3\textwidth}
			\centering
			\includegraphics[width=2in]{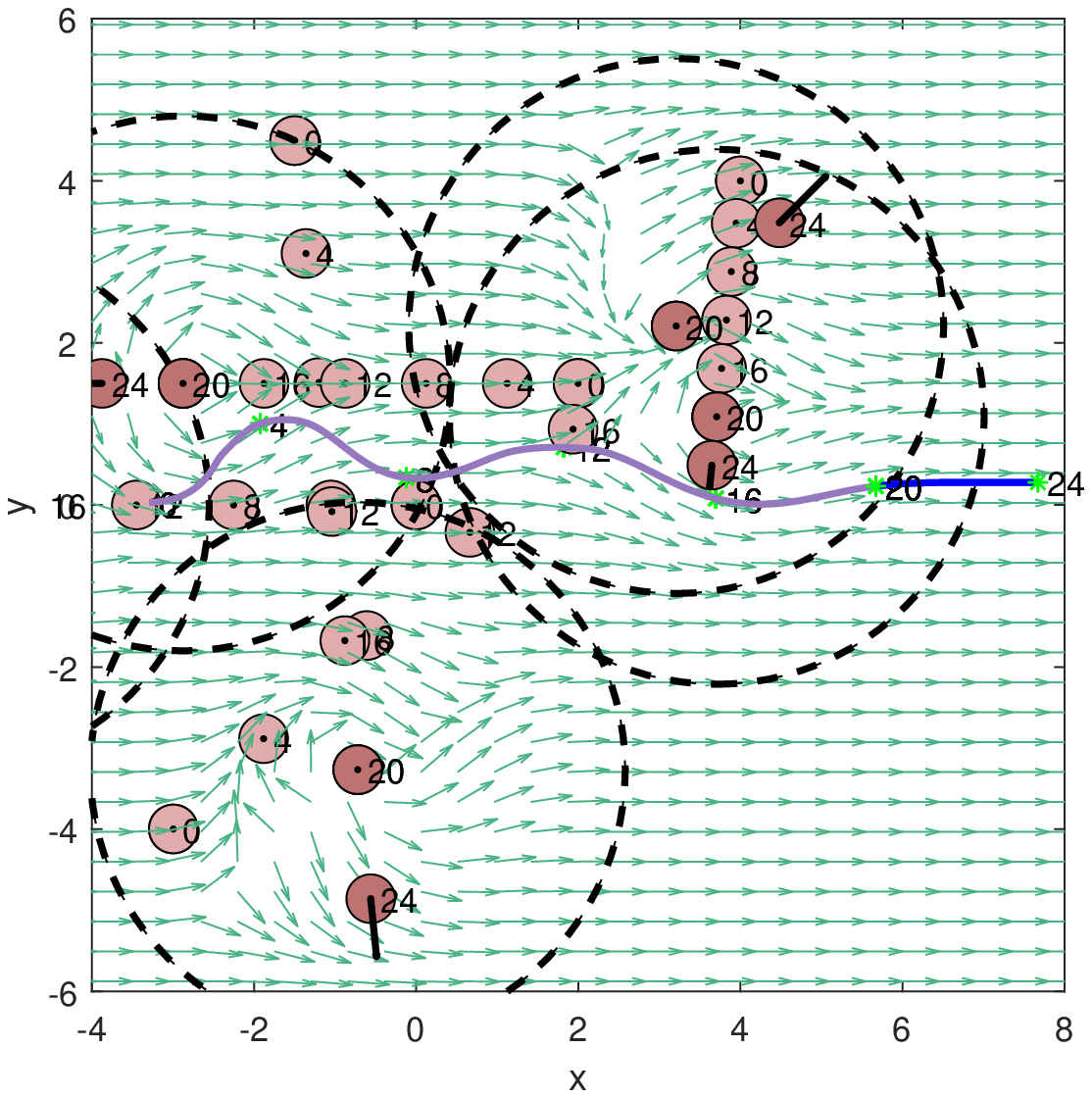}
			\caption{$t \in [20,24]$ sec}
		\end{subfigure}
		\caption{Trajectories of system \eqref{eqn::SysInertial} (blue) generated using input $u(t)$ defined in  \eqref{eqn::InertialControlSingleDynamicObstacle}, for a CAVF (green vector field) with the following parameters $a = 1, ~r_i = 3 ~\mathrm{m}, ~\psi_d = 0,~ V_o = 0.9 ~\mathrm{m}/\mathrm{s}, ~\theta_o = 2.35$.}
		\label{fig::Airspace}	
	\end{figure}
	\begin{figure}
		\centering
		\includegraphics[width=6.5in]{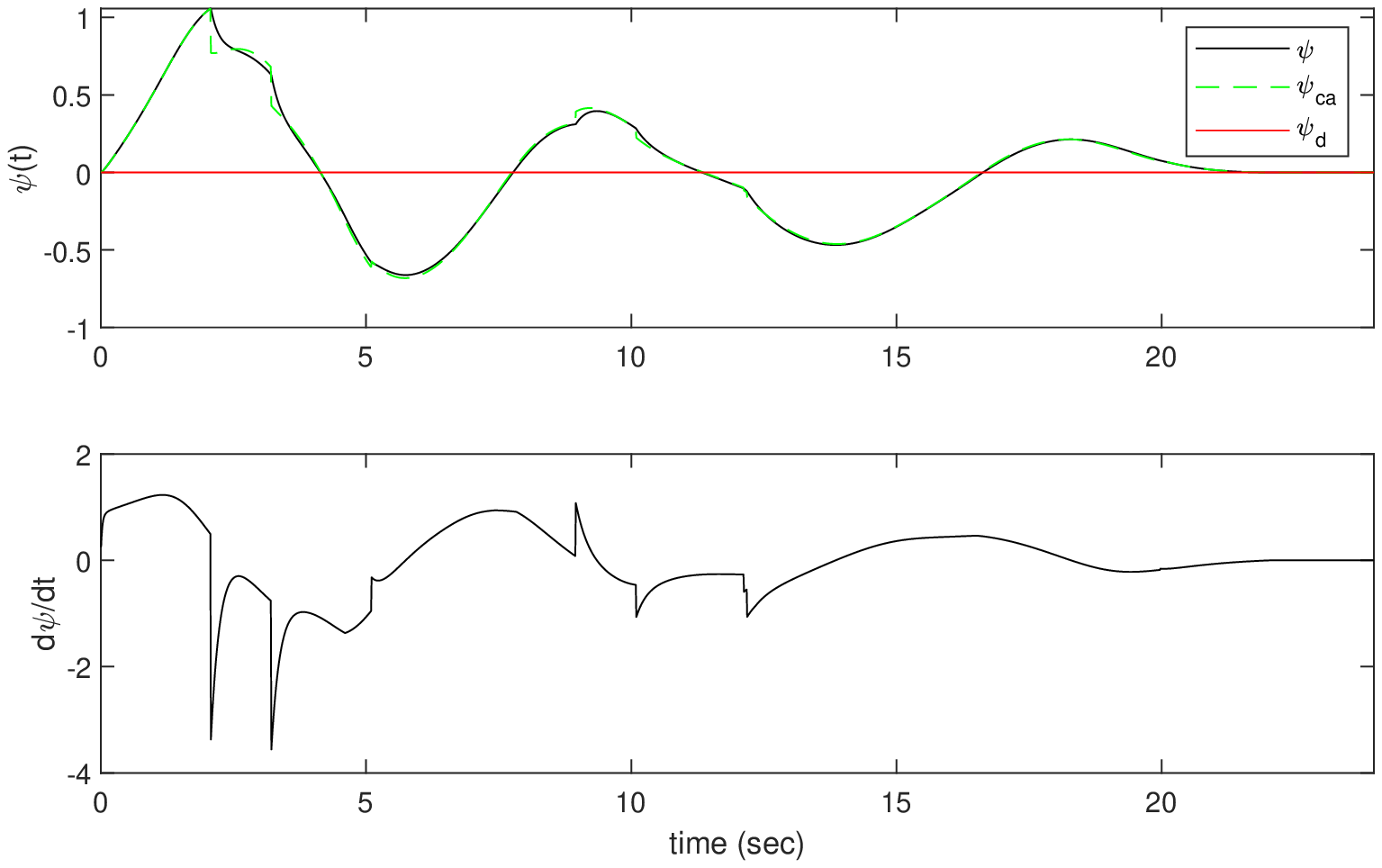}
		\caption{Steering rate controls with resulting UAV heading v/s time.}
		\label{fig::Airspace_controls}
	\end{figure}
	
	\subsection{Collision Avoidance Scenario $3$: Cluttered workspace with multiple static and moving obstacles}
	In the last simulation scenario, an UAV moving through an workspace that contains both static and moving obstacles is considered. The UAV's goal is to avoid collisions with any obstacle while maintaining an Eastward direction of motion, $\psi_d = 0$. The obstacles have different radii, $r_o^j > 0$ and may move with different speeds $V_o^j \in [0,1)$, in multiple directions $\theta_o^j \in [0,2\pi)$, $\forall j \in \mathcal{J}(t)$.
	
	As soon as the UAV registers the obstacles, it generates a mixed CAVF using Algorithm \ref{algo::1}. Then, applying the controller given in \eqref{eqn::TrackingControl}, the UAV is able to track with minimal heading error the desired vector field. The results of the simulation are illustrated in Figure \ref{fig::Clutter} for the following UAV initial conditions and parameters, respectively: $x_i = -3.3, ~y_i = 0, ~\psi_i = \psi_d = 0, ~ V = 1~\mathrm{m}/\mathrm{s},~ a = 1$ and $r_i^j = 3, ~\forall j \in \mathcal{J}(t)$. The trajectory obtained shows a more aggressive UAV behavior than in the previous simulations due to the immediate danger of colliding with the moving obstacles while intercepting static obstacles in its path. The trajectory was obtained using the control input defined in \eqref{eqn::TrackingControl} with a gain $K(t)$ that depends on the minimum distance between the UAV and the detected obstacles, as defined in the previous simulation scenario. Overall, the proposed CAVF generates a guidance field that, when tracked accurately, leads to trajectories free of any collisions with the sensed obstacles.
	\begin{figure}
		\centering
		\begin{subfigure}{0.3\textwidth}
			\centering
			\includegraphics[width=2in]{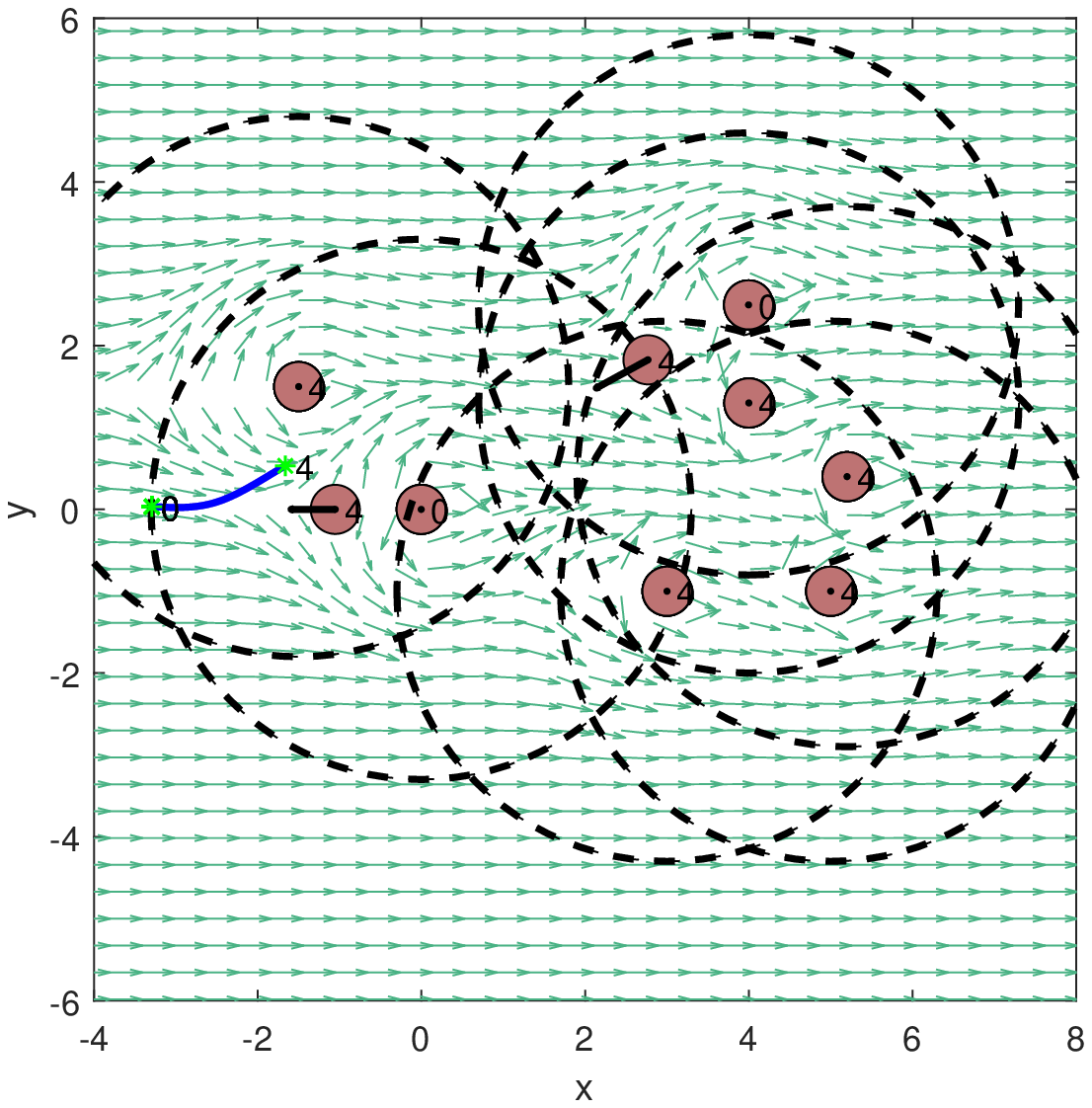}
			\caption{$t \in [0,4]$ sec}
		\end{subfigure}\hspace{1 mm}
		\begin{subfigure}{0.3\textwidth}
			\centering
			\includegraphics[width=2in]{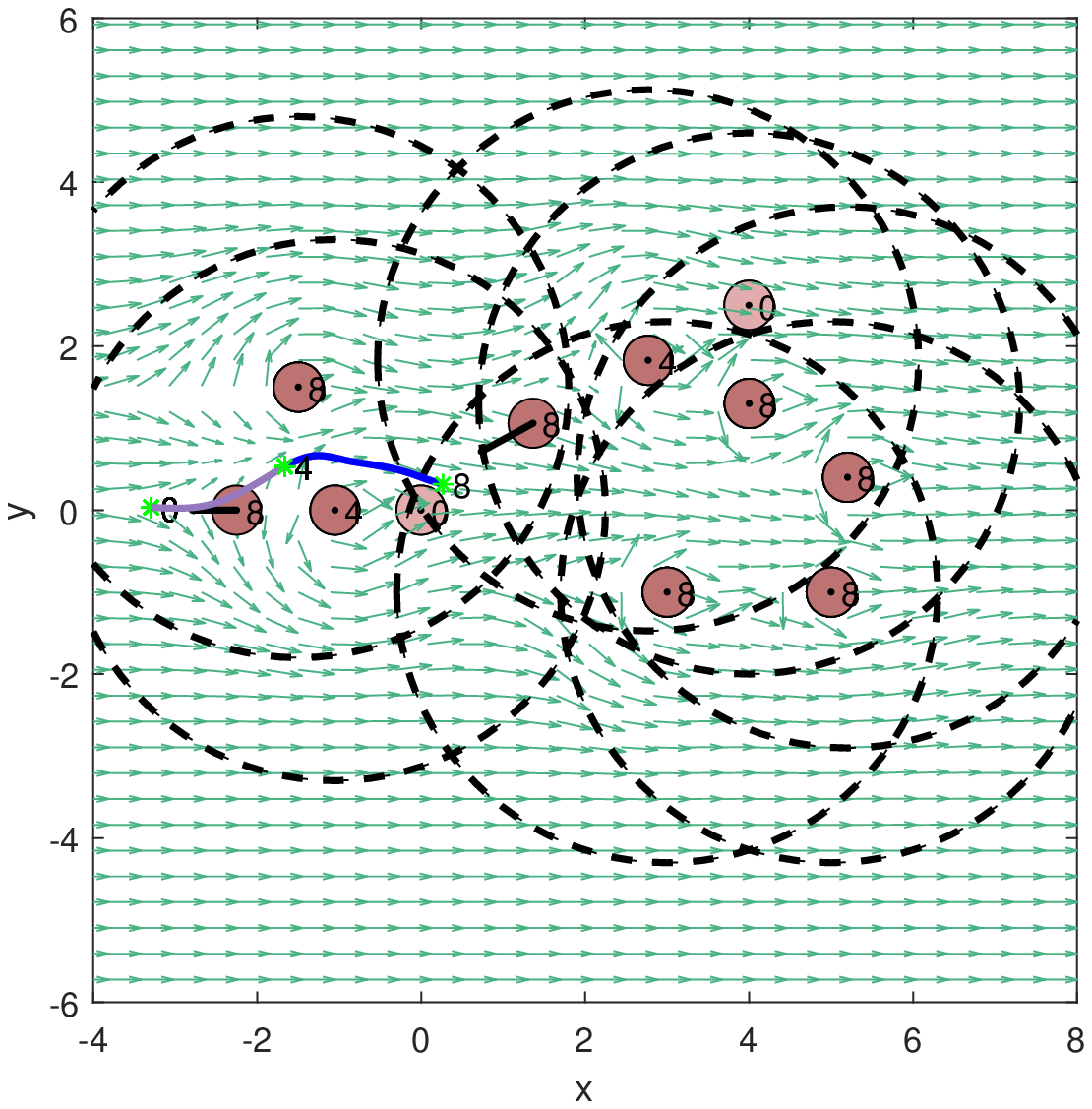}
			\caption{$t \in [4,8]$ sec}
		\end{subfigure}\hspace{1 mm}
		\begin{subfigure}{0.3\textwidth}
			\centering
			\includegraphics[width=2in]{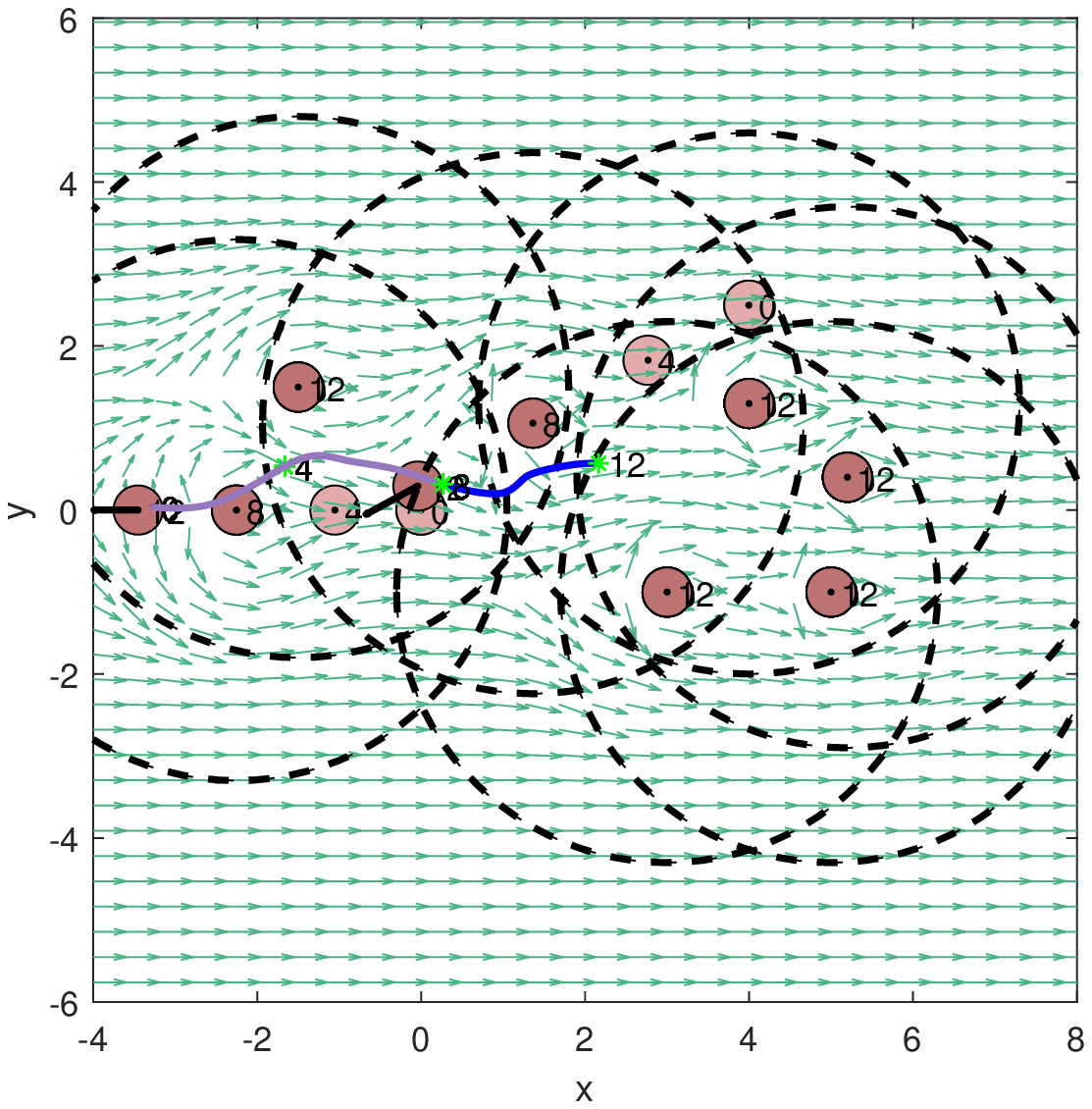}
			\caption{$t \in [8,12]$ sec}
		\end{subfigure}
		\begin{subfigure}{0.3\textwidth}
			\centering
			\includegraphics[width=2in]{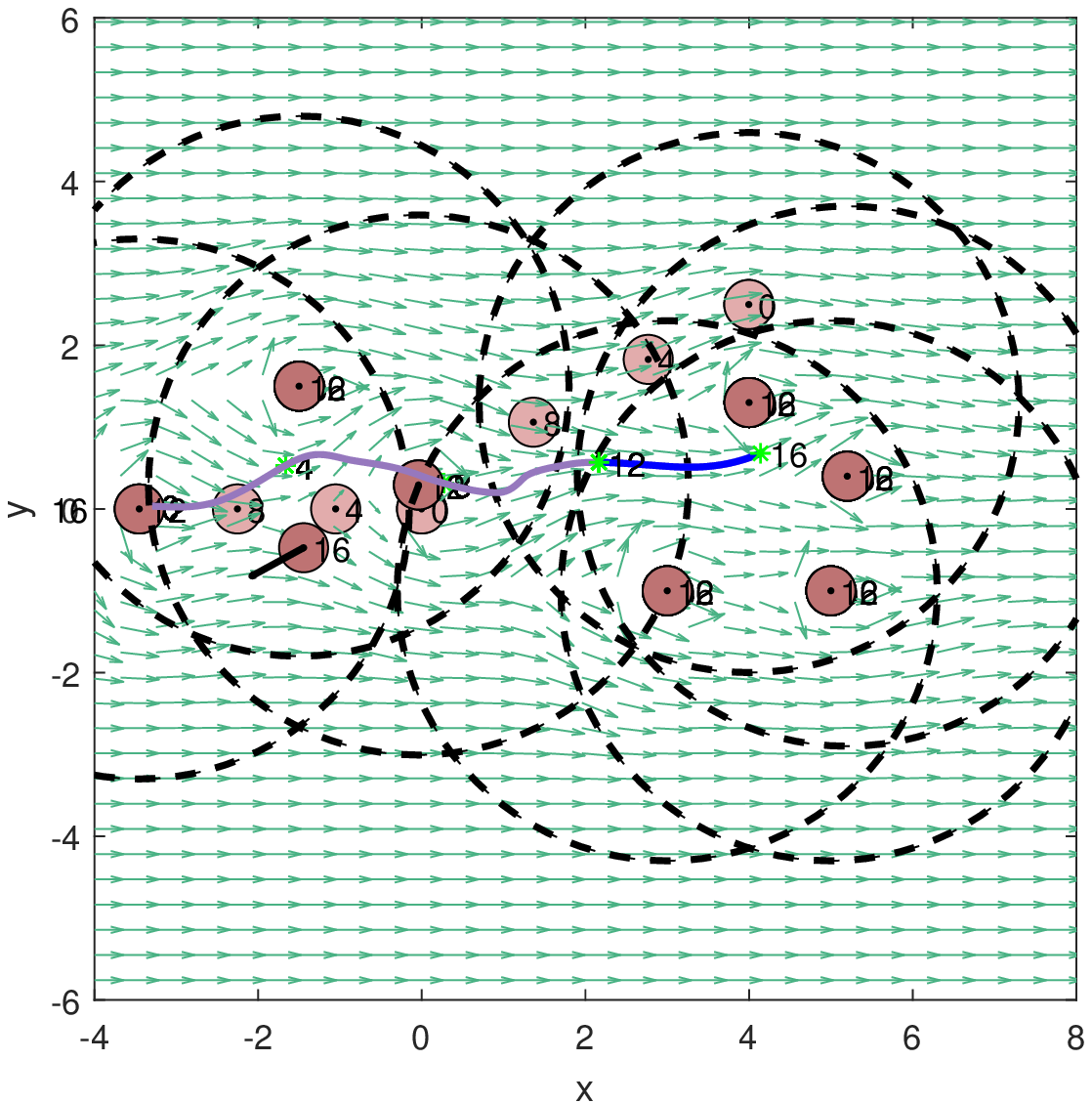}
			\caption{$t \in [12,16]$ sec}
		\end{subfigure}\hspace{1 mm}
		\begin{subfigure}{0.3\textwidth}
			\centering
			\includegraphics[width=2in]{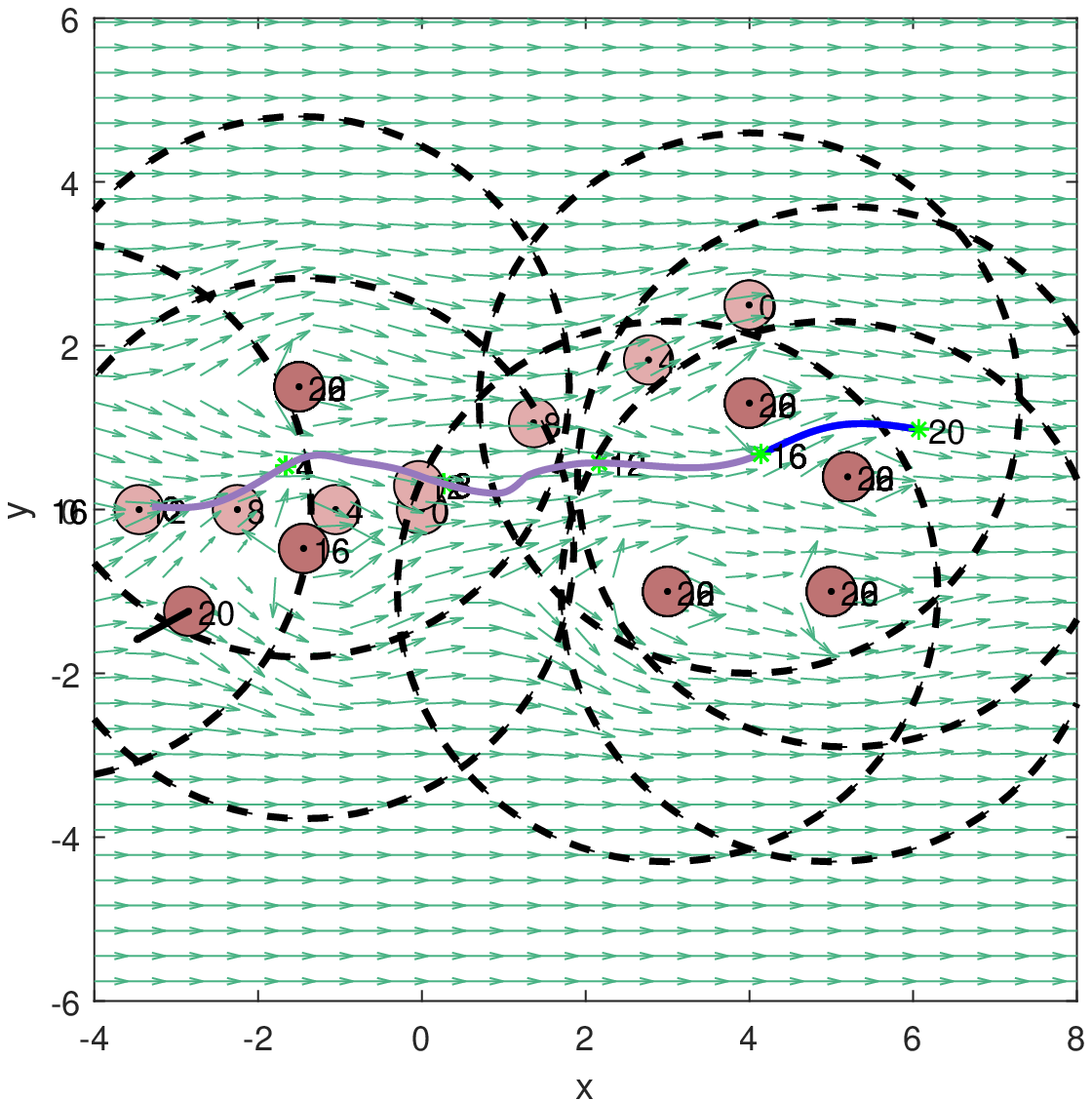}
			\caption{$t \in [16,20]$ sec}
		\end{subfigure}\hspace{1 mm}
		\begin{subfigure}{0.3\textwidth}
			\centering
			\includegraphics[width=2in]{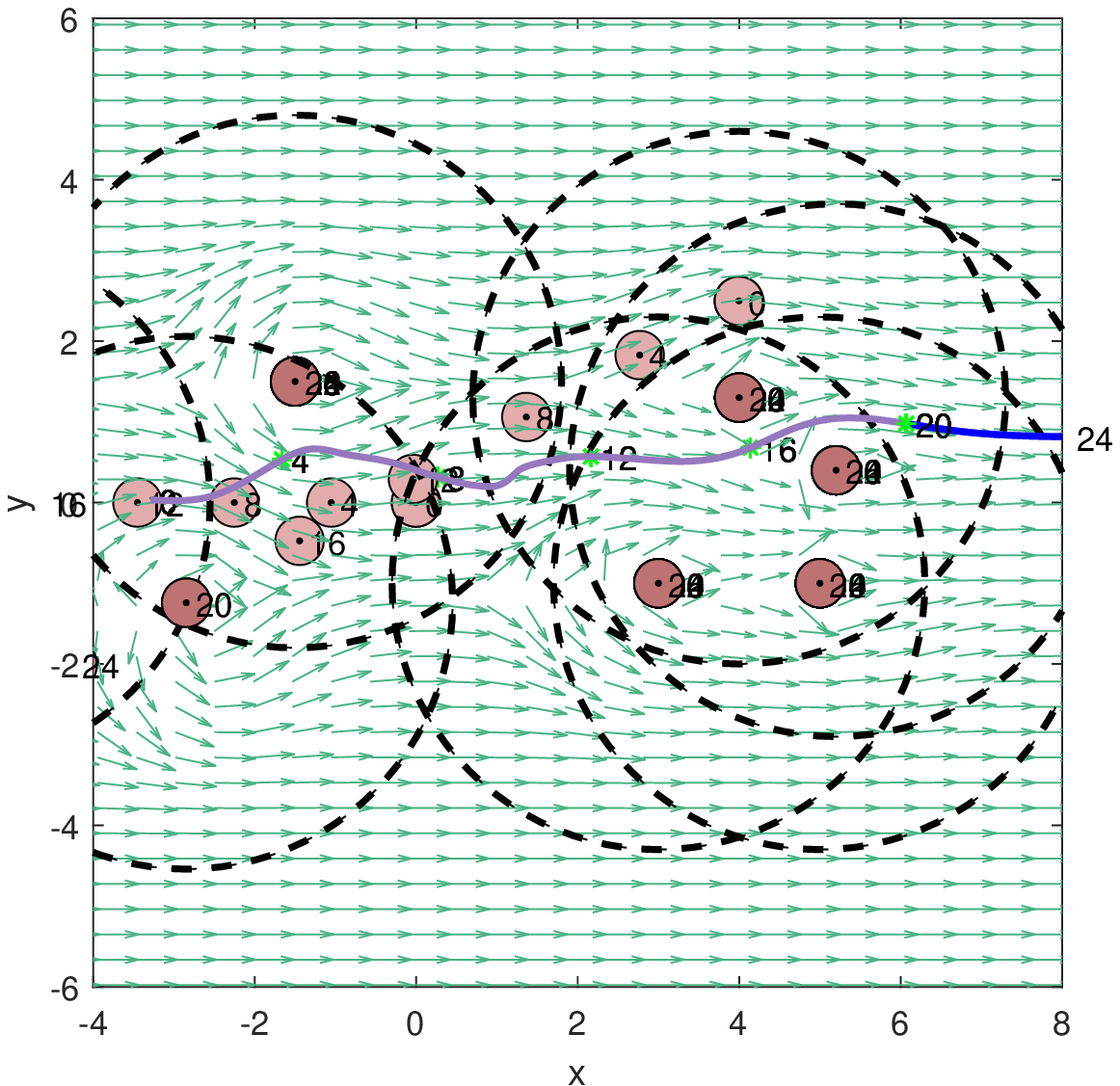}
			\caption{$t \in [20,24]$ sec}
		\end{subfigure}
		\caption{System \eqref{eqn::SysInertial} trajectories (blue) generated using \eqref{eqn::InertialControlSingleDynamicObstacle}, for a CAVF (green vector field) with the following parameters $a = 1, ~r_i = 3 ~\mathrm{m}, ~\psi_d = 0,~ V_o = 0.9 ~\mathrm{m}/\mathrm{s}, ~\theta_o = 2.35$.}
		\label{fig::Clutter}	
	\end{figure}
	\begin{figure}
		\centering
		\includegraphics[width=6.5in]{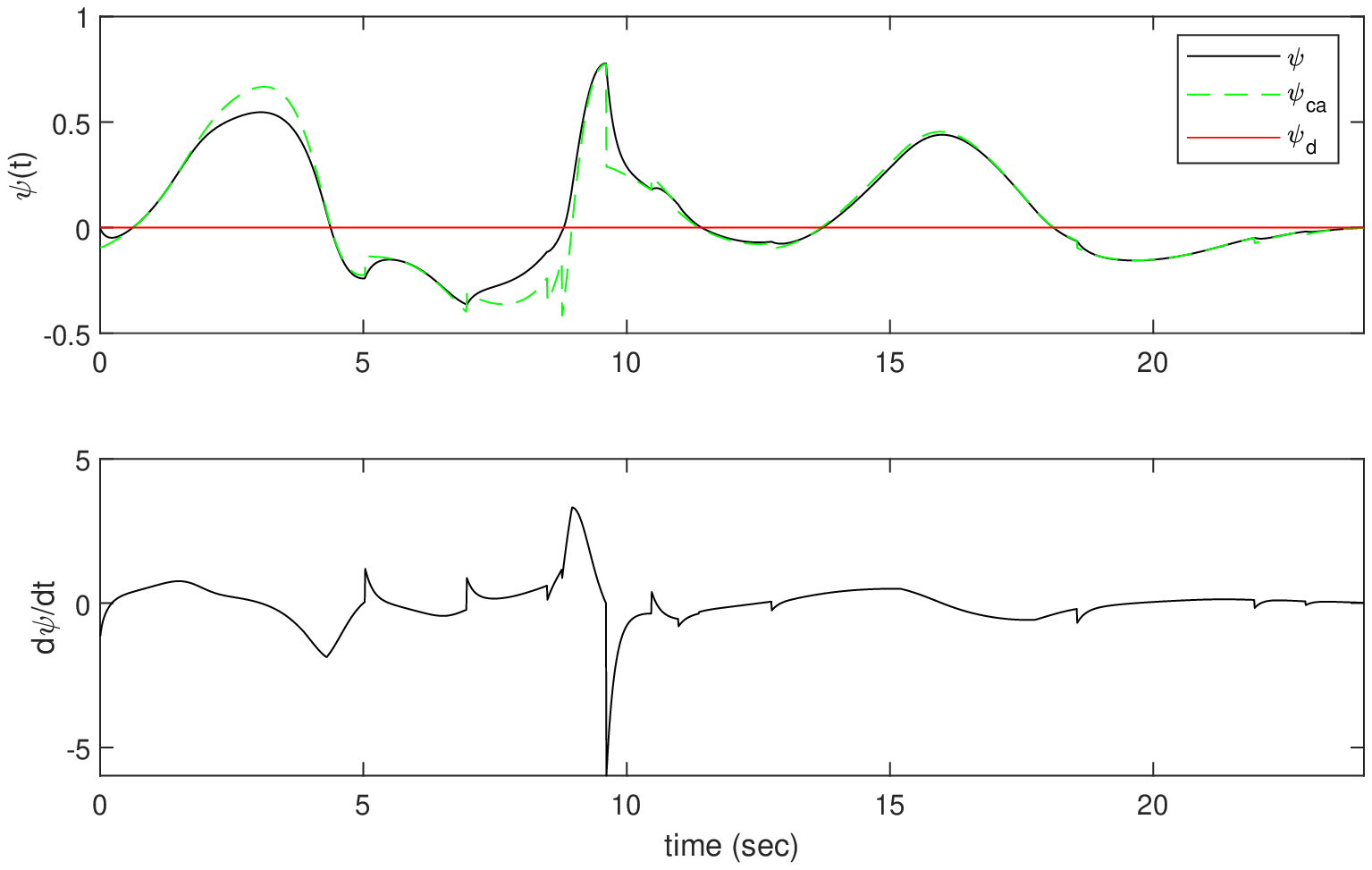}
		\caption{Steering rate controls with resulting UAV headings.}
		\label{fig::Clutter_controls}
	\end{figure}

	\section{Conclusion}
	\label{sec::concl}
	In this paper, a new methodology for collision avoidance that can be applied to UAV path planning is presented. The approach described herein makes use of a new class of guidance vector fields called collision avoidance vector fields that are determined using the relative distance between the agent and the obstacles. To perform the collision avoidance maneuvers prescribed by the vector fields, a steering law is implemented.  It is important to note that the separation between the guidance vector fields and the kinematic controllers derived in Section \ref{sec::Control}, allows the presented methodology to be used for other systems as well. For example, the collision avoidance vector field can be used as a guidance field for a simple car-model, as long as the tracking controller is changed to account for the car dynamics. 
	
	Simulations performed for three different scenarios illustrated the efficacy of the presented algorithm. In particular, it is demonstrated that the proposed approach based on collision avoidance vector fields is able to generate motion plans that take into account multiple static and moving obstacles with very little computational effort, making their generation appropriate for real-time applications. Furthermore, the proposed controllers generate trajectories which follow these motion plans accurately and within specified tolerances.  
	
	Further extensions of the collision avoidance vector field approach include but may not be limited to introducing disturbance models resulting from winds or to modifying the approach to provide collision avoidance guarantees in the presence of obstacle position and velocity uncertainty. 
	
	\section*{Appendix}
	\subsection{Proof for Proposition \ref{prop::mult_control}}
	Consider first a mixed CAVF, whose equations of motion are given by $[\dot{x}_m,\dot{y}_m]^T =  h_m(\bm{p})$, resulting from two separate CAVFs generated by isolating each obstacle with their own equations of motion: $[\dot{x}_1,\dot{y}_1]^T = h_1(\bm{p})$ and $[\dot{x}_2,\dot{y}_2]^T = h_2(\bm{p})$. The mixed CAVF is given by applying Algorithm \ref{algo::1}:
	\begin{align}
	h_m(\bm{p}) = w^1 h_1(\bm{p}) + w^2 h_2(\bm{p})
	\label{eqn::app1}
	\end{align}
	and therefore we get the following relations:
	\begin{equation}
	\begin{aligned}
	\dot{x}_m &= w^1 \dot{x}_1 + w^2 \dot{x}_2, \\
	\dot{y}_m &= w^1 \dot{y}_1 + w^2 \dot{y}_2.
	\end{aligned}
	\label{eqn::app2}
	\end{equation}
	Moreover, it is known that each vector field has the same magnitude $V$ and the following different headings: $\psi_m = \atantwo(\dot{y}_m,\dot{x}_m)$ for $h_m$, $\psi_1 = \atantwo(\dot{y}_1,\dot{x}_1)$ for $h_1$ and $\psi_2 = \atantwo(\dot{y}_2,\dot{x}_2)$ for $h_2$. Then, using the derivative of $\atantwo$, the following evolutions for each heading angle are obtained:
	\begin{equation}
	\begin{aligned}
	\dot{\psi}_1 &= (\dot{x}_1 \ddot{y}_1- \ddot{x}_1\dot{y}_1)/V^2, \\
	\dot{\psi}_2 &= (\dot{x}_2 \ddot{y}_2 - \ddot{x}_2\dot{y}_2)/V^2, \\
	\dot{\psi}_m &= \frac{(\dot{x}_m \ddot{y}_m - \ddot{x}_m\dot{y}_m)}{(w^1 V)^2 + (w^2 V)^2 + 2V^2w^1w^2\cos(\psi_1 - \psi_2)}.
	\end{aligned}
	\label{eqn::app3}
	\end{equation}
	Next, using \eqref{eqn::app2} in \eqref{eqn::app3}, the following relation is obtained for $\dot{\psi}_m$:
	\begin{equation}
		\begin{aligned}
			\dot{\psi}_m &= W^1(t) \dot{\psi}_1 + W^2(t) \dot{\psi}_2,
		\end{aligned}
	\end{equation}
    where,
    \begin{equation}
    	\begin{aligned}
    		W^1(t) &= \frac{(w^1)^2 + w^1 w^2 \cos(\psi_1 - \psi_2)}{(w^1)^2 + (w^2)^2 + 2 w^1 w^2 \cos(\psi_1 - \psi_2)}, \\
    		W^2(t) &= \frac{(w^2)^2 + w^1 w^2 \cos(\psi_1 - \psi_2)}{(w^1)^2 + (w^2)^2 + 2 w^1 w^2 \cos(\psi_1 - \psi_2)}.
    	\end{aligned}
    \end{equation}
	This implies that $\dot{\psi}_m = W^1(t) \dot{\psi}_1 + W^2(t) \dot{\psi}_2$ corresponds to the mixed CAVF for two obstacles. When there are more than two obstacles, the same approach can be carried out to get the following generalized weights: 
    \begin{equation}
	\begin{aligned}
		W^i(t) &= \frac{(w^i)^2 + \sum\limits_{\substack{j=1 \\ j\neq i}}^{n}w^i w^j \cos(\psi_i - \psi_j)}{\sum\limits_{j=1}^{n}(w^j)^2 + 2 \sum\limits_{\substack{i,j \\ i \neq j}}^{n} w^i w^j \cos(\psi_i - \psi_j)}, \text{ for all } i \in {1, 2, \dots, n}.
	\end{aligned}
	\end{equation}
	Therefore, $\dot{\psi}_m = \sum_j W^j(t) \dot{\psi}_j$. This implies that the control input \eqref{eqn::InertialControlMixedObst} corresponds to the heading evolution of a mixed CAVF. $\blacksquare$ 
	
	\subsection{Proof for Proposition \ref{prop::gain}}
	\label{app::proof}
	Let $e(t) = \psi(t) - \psi_{\text{ca}}(t)$ be the heading error between the UAV and the mixed CAVF. Then, by applying \eqref{eqn::TrackingControl} to system \eqref{eqn::SysInertial}, the following error dynamics are obtained:  $\dot{e}(t) = -Ke(t) + u_m(t) - \dot{\psi}_{\text{ca}}(t)$. It is straightforward to see that $\dot{\psi}_{\text{ca}} = u_m(t)$, which results in $\dot{e}(t) = -Ke(t)$. Therefore, $e(t) = c\exp(-K(t-t_i))$, where $c$ is a constant that depends on the initial heading error, $c = \psi(t_i) - \psi_{\text{ca}}(t_i)$. 
		
	Next, suppose that the heading error satisfies $e(t) \leq e_\psi, ~\forall t > t_\text{track} > t_i$, where $t_\text{track}$ will be defined later. Equivalently, $c \exp(-K(t-t_i)) \leq e_\psi$, or, expressed differently, $K(t-t_i) \geq \log{c} - \log{e_\psi}$. Therefore, if $K$ satisfies the inequality
	\begin{align}
	K \geq \frac{\log{c}-\log{e_\psi}}{t_\text{track}-t_i},
	\label{eqn::GainFormula}
	\end{align}
	then $e(t_\text{track}) \leq e_\psi$ which directly implies that the heading error converges with the given tolerance, that is, $e(t)\leq e_\psi ~ \forall t \geq t_{\text{track}}$. Moreover, since the agent travels with constant speed, time $t_\text{track}$ can be selected such that $t_\text{track} = \delta/(2V)$, where $\delta$ is the minimum separation distance between obstacles, assumed for mixed CAVF. Then, selecting the gain to be:
	\begin{align}
	\hat{K} = \frac{\log{c}-\log{e_\psi}}{\frac{\delta}{2V}-t_i},
	\label{eqn::Gain}
	\end{align}
	guarantees convergence of the agent's heading to the mixed CAVF within a ball of radius $\delta/2$. Note that gain $\hat{K}$ depends on the initial error between the agent heading and the CAVF which is bounded from above by $\pi$. Therefore, gain \eqref{eqn::TrackingGain} that depends only on the separation distance and error tolerance can be used for any situation instead of \eqref{eqn::Gain}. $\blacksquare$
	
	\section*{Acknowledgments}
	This work was supported in part by the National Science Foundation award 1562339. The authors would like to thank student Riccardo Caccavale of Sapienza Universita di Roma for bringing to our attention an error in our previous proof for Proposition \ref{prop::mult_control} and a typographical error in our previous presentation of Equation \ref{eqn::cavf_moving_rotation}.
	
	\bibliography{SteeringControl2}
	
\end{document}